\def\eqref#1{equation~\ref{#1}}
\def\1{\bm{1}}
\def\rvx{{\mathbf{x}}}
\def\va{{\bm{a}}}
\def\ve{{\bm{e}}}
\def\vx{{\bm{x}}}
\def\vy{{\bm{y}}}
\def\mA{{\bm{A}}}
\def\mH{{\bm{H}}}
\def\mI{{\bm{I}}}
\def\mW{{\bm{W}}}
\def\mX{{\bm{X}}}
\DeclareMathAlphabet{\mathsfit}{\encodingdefault}{\sfdefault}{m}{sl}
\SetMathAlphabet{\mathsfit}{bold}{\encodingdefault}{\sfdefault}{bx}{n}
\def\gG{{\mathcal{G}}}
\def\sS{{\mathbb{S}}}
\def\sV{{\mathbb{V}}}
\def\emA{{A}}
\def\emW{{W}}
\newcommand{\E}{\mathbb{E}}
\newcommand{\R}{\mathbb{R}}
\useunder{\uline}{\ul}{}
\newcommand{\ourmodel}{\textsc{Bangs}\xspace}
\newtheorem{theorem}{Theorem}[section]
\newtheorem{lemma}[theorem]{Lemma}
\theoremstyle{definition}
\newtheorem{definition}{Definition}[section]
\theoremstyle{remark}
\newtheorem{assumption}{Assumption}
\title{BANGS: Game-Theoretic Node Selection for Graph Self-Training}
\author{%
  Fangxin Wang, Kay Liu, Sourav Medya, Philip S. Yu \\
  Department of Computer Science\\
  University of Illinois Chicago\\
  \texttt{fwang51, zliu234, medya, psyu@uic.edu} 
}
\begin{document}

\maketitle

\begin{abstract}
%\sm{please check}

Graph self-training is a semi-supervised learning method that iteratively selects a set of unlabeled data to retrain the underlying graph neural network (GNN) model and improve its prediction performance. While selecting highly confident nodes has proven effective for self-training, this pseudo-labeling strategy ignores the combinatorial dependencies between nodes and suffers from a local view of the distribution.
To overcome these issues, we propose \ourmodel, a novel framework that unifies the labeling strategy with conditional mutual information as the objective of node selection. Our approach---grounded in game theory---selects nodes in a combinatorial fashion and provides theoretical guarantees for robustness under noisy objective. More specifically, unlike traditional methods that rank and select nodes independently, \ourmodel considers nodes as a collective set in the self-training process. Our method demonstrates superior performance and robustness across various datasets, base models, and hyperparameter settings, outperforming existing techniques. The codebase is available on \url{https://github.com/fangxin-wang/BANGS}.

% Graph self-training is a semi-supervised learning method that iteratively selects a set of unlabeled data to retrain the underlying graph neural network (GNN) model and improve its prediction performance.
% While selecting highly confident nodes has proven effective for self-training, this pseudo-labeling strategy ignores the combinatorial dependencies between nodes and suffers from a local view of the distribution.
% To overcome these issues, we propose BANGS, a novel game-theoretic framework that unifies the labeling strategy with the conditional mutual information for node selection. It also selects nodes in a combinatorial manner and provides theoretical evidence for being robust under noisy objective. More specifically, unlike traditional methods that rank and select nodes independently, BANGS considers nodes as a collective set in the self-training process. Our method demonstrates superior performance and robustness across various datasets, base models, and hyperparameter settings, outperforming existing techniques. The codebase is available on https://anonymous.4open.science/r/BANGS-3EA4.

\end{abstract}

\section{Introduction and Related Work}

%Note to myself - Planning

%5.15: Revise intro \& prelim, Run Experiments

%5.16: Revise Method, Write theorem, Run Side Experiments

%5.17: Write theorem (4.2), Run Side Experiments

% 5.18: Write theorem (4.3) \& Experiments, Add general conclusions

% 5.19: Add checklist, Run appendix Experiments if got time

% 5.20: Revise, Send a version to Philip

% 5.21: Revise

% 5.22: Revise, 3PM CST DDL 
%\sm{Fangxin, can you add a few sentences why self training is important and its applications}

Self-training is a widely adopted strategy in semi-supervised graph learning~\citep{lee2013pseudo} to transform a weak initial model into a stronger one~\citep{frei2022self}. This approach enhances the base model's performance with reduced labeling effort and ensures the model's robustness against noisy or drifting data~\citep{carmon2019unlabeled,wei2020theoretical}. The iterative method is a common technique in self-training, where an initial teacher model—trained on labeled data—generates pseudo-labels for the unlabeled data. A student model is then trained to minimize empirical risk using both the labeled and pseudo-labeled data. Subsequently, the student model takes the role of the teacher, updating the pseudo-labels for the unlabeled data. This iterative process is repeated multiple times to progressively improve the final student model.

Existing research in graph self-training mainly focuses on solving three different aspects of the problem. \textit{First, confirmation bias}, which refers to the phenomenon where the model overfits to incorrect pseudo-labels, is considered the main reason for the underperformance of naive self-training~\citep{arazo2020pseudo}. To tackle this issue in graph setting, current works generally rely on the confidence, i.e., probabilities of the most likely class, to select nodes for pseudo-labels. Common strategies include selecting the top $k$ nodes~\citep{sun2020multi,botao2023deep}, or nodes with confidence surpassing a given threshold~\citep{wang2021confident,liu2022confidence}.
\textit{Second, confidence calibration} refers to aligning the confidence of model predictions with the actual likelihood of those predictions being correct. Due to the neighborhood aggregation mechanism of Graph Neural Networks (GNNs), GNNs are often under-confident~\citep{wang2021confident}, which may lead to nodes with correctly predicted labels being excluded from the selection set. Therefore, GNN self-training is an important downstream task of GNN calibration, and the self-training performance is utilized as an evaluation metric of calibration~\citep{liu2022calibration,hsu2022makes,li2023informative}. 
    %Though sole calibration have been shown to improve GNN self-training performance, few work examine whether calibration can further enhance performance when combined with other strategies.
\textit{Third, information redundancy}, a concept from active learning, has an important role in graph self-training. Nodes selected solely based on high confidence often carry similar information, which can lead to slow convergence and biased learning in the student model, even when the labels are correctly predicted. To address this, \cite{liu2022confidence} select and pseudo-label high-confidence nodes but reduce the weight of low-information nodes in the training loss to mitigate the redundancy in a post-hoc process. Similarly, as a pre-processing step, \cite{li2023informative} employs mutual information maximization~\citep{velickovic2019deep} to select more representative unlabeled nodes from their local neighborhoods.
%\textit{Third, information redundancy}, a concept from active learning, has recently gained attention in graph self-training. Nodes selected based solely on high confidence often carry similar information, which can lead to slow convergence and biased learning in the student model, even if the labels are correctly predicted. To address this, \cite{liu2022confidence} select and pseudo-label high-confidence nodes but reduce the weight of low-information-gain nodes in the training loss to mitigate redundancy post-hoc. In a pre-processing manner, \cite{li2023informative} employ graph mutual information (MI) maximization~\cite{velickovic2019deep} to select more representative unlabeled nodes from their local neighborhoods.
    
%\end{itemize}
All the works described above suffer from a few key limitations.\\ 
(1) \textit{Non-unified methods: } Graph self-training is a sequential decision problem with multiple components, e.g., the selection and label criterion, the design of the teacher and the student model training framework.
Due to the complexity of self-training problem, current works tend to treat these issues by proposing individual frameworks with different focuses. \\
(2) \textit{Independent and iterative selection: } The self-training strategies require to use a set of points for pseudolabels. The current graph self-training strategies only select nodes while ranking them independently based on some value such as confidence. The node selection process is a combinatorial problem, where the dependency between the candidate nodes is important in graphs. \\
(3) \textit{Local view of distribution:} In each iteration, the student models learn from only labeled and pseudo-labeled data. Generalization over the entire distribution is hampered by the noisy or incorrect pseudo-label and possible distribution shifts.

%\fx{I have revised this, please take a look.}
\textbf{Our contributions. }Towards this end, we address these limitations by designing a framework that unifies several key insights for self-training. Our contribution can be summarized as follows.
\begin{itemize}
    \item \textit{Novel formulation. }To systematically address the graph self-training problem, we propose a novel objective for node selection based on the conditional mutual information. We demonstrate that this objective can be approximated using predictions on unlabeled samples, and leverage the feature influence from already labeled nodes to estimate the predictions of the student model for unlabeled samples.
    %\sm{we can add about how our objective is better suited for self-training - diverse/global information?}
    \item \textit{Novel framework with theoretical support. }We introduce a game-theoretic framework, \ourmodel, which incorporates our designed objective as a utility function and select nodes in a combinatorial manner. We also show the limitations  and non-optimality of independent selection, meanwhile providing theoretical support that \ourmodel can optimally select node sets even under a noisy utility function.
    \item \textit{Experiments. }Experimental results validate the effectiveness of \ourmodel across various datasets and base models. By theoretically linking random walk and feature propagation, we enhance the scalability of our approach. Additionally, we demonstrate the effectiveness of \ourmodel under noisy labels and varying portion of training data.
\end{itemize}
%\todo{will improve later}

% Whereas, recent works by \cite{liu2022confidence,li2023informative,wang2024uncertainty} warn that only relying on high-confidence unlabeled nodes may result in distribution shift and information redundancy. These issues prevent the self-trained model from achieving the optimal performance with current information. Therefore, it is necessary to develop a labeling strategy for GNN beyond naively selecting high-confidence nodes. 

%\fx{ TODO: Add example fig. here: Always select accurate labels may bring worse performance.}

\section{Problem Definition}

\label{Sec. Problem Statement}
%In this paper, we consider the problem of self-training for node classification tasks. Many efforts have been made to label those selected samples in a more faithful manner~\cite{mey2016soft,lienen2021credal}, or reduce the effect of pseudo-label noise in training student models~\cite{chen2022debiased,wang2022freematch}. We focus on how to select the unlabeled nodes such that the student model trained on the original labels and psuedo-labels of these selected nodes will generate better performance. 

  %In this paper, we consider the problem of self-training for the node classification task. In the iterative self-training method, an initial teacher mode which is trained on labeled data, generates pseudo-labels for the unlabeled data. Our goal is \textit{how to select the unlabeled nodes such that the student model trained on the given set of labels and psuedo-labels of these selected nodes will predict accurate classes for the test nodes}.
Consider a graph $\displaystyle \gG$ containing a size-$N$ node set $\displaystyle \sV$ where $v_i$ denotes the $i$-th node in $\displaystyle \gG$. $\displaystyle \mX$ and $\displaystyle \mA$ denote the feature and adjacency matrix respectively in $\displaystyle \gG$, where $\displaystyle \vx_i$ represents the feature vector of the $i$-th node. $\displaystyle \emA_{i,j}$, the $i$-th row and $j$-th column element in $\displaystyle \mA$, represents whether $i$-th node and $j$-th node is connected by an edge.
We consider three different sets of nodes based on the labels: labeled nodes $\displaystyle \sV^l$, pseudo-labeled nodes $\displaystyle \sV^p$, and unlabeled nodes $\displaystyle \sV^u$. We denote the ground truth and pseudo-labels of node $v_i$ as $y_i$ and $\hat{y}_i$ respectively. The node labels are from a set of $C$ classes, i.e., $y_i, \hat{y}_i \in \{ 1, 2, ..., C \}$. Table \ref{tab:notations} provides an overview of the frequently used notations in this paper along with their corresponding explanations.

%To differentiate the two types of labels, we denote the ground truth label of node $v_i$ as $y_i$, and the pseudo-label of node $v_i$ as $\hat{y}_i$. The node labels are a set of $C$ classes, i.e., $y_i, \hat{y}_i \in \{ 1, 2, ..., C \}$.

%Formally, we consider a graph $G$ containing a set of nodes $V$, and $v_i$ denotes the $i$-th nodes in $G$. 
%$X$ and $A$ denote the feature and adjacency matrix in $G$, where $x_i$ represents the feature vector of the $i$-th node, and $a_{ij}$, the $i$-th row and $j$-th column element in $A$, represents whether $i$-th node and $j$-th node is connected by an edge.
%In the $r$-th round of self-training setting, we further divide the nodes into three subsets: labeled nodes $V^l$, pseudo-labeled nodes $V^p_r$, and unlabeled nodes $V^u_r$. For differentiation, we denote the ground truth label of node $v_i$ as $y_i$, and the pseudo-label of node $v_i$ as $\hat{y}_i$. The node labels contains $K$ class, i.e., $y_i, \hat{y}_i \in \{ C_1, C_2, ..., C_K \}$.

\textbf{Self-training. }
In the iterative self-training procedure, the idea is to start with training a teacher model $f_0$ (at $0$-th step) with $\displaystyle \gG = \{\mA, \mX, \vy^{l} \}$, where $\displaystyle \vy^{l}$ denote the ground truth labels of $\sV^l \subset \sV$. 
Initially, at $0$-th step, the set of nodes with pseudo labels is empty, i.e., $\displaystyle \sV^p_0 = \emptyset$. The teacher model $f_0$ generates pseudo-labels for all nodes in $\displaystyle \sV^u_0$. At each iteration $r \in \{1, ..., R\}$, a subset of the unlabeled set $\sV^u_{r-1}$ in the previous step, $\sV^{p'}_{r}$ is selected based on a specific criteria.
Thus, the pseudo-labeled set at $r$-the step becomes $\sV^p_r = \sV^p_{r-1} \cup \sV^{p'}_{r}$, and unlabeled set $\sV^u_r = \sV^u_{r-1} \setminus \sV^{p'}_{r} $. The pseudo-labels for these newly added samples are decided by the teacher model $f_{r-1}$. Subsequently, at the $r$-the step, the augmented $\displaystyle \sV^p_r$ along with the originally labeled nodes $\displaystyle \sV^l$ are used to train a new student model $f_r$, which uses additional pseudo-labels to improve its performance. In our setting, the student model $f_r$ in the $r$-th step is utilized as the teacher model in the $(r+1)$-th step, allowing the framework to iteratively refine pseudo-labels based on the student model's improved understanding. We use step, round, and iteration interchangeably. 

In previous self-training procedures, the goal for the student model $f_r$ at round $r$ is to minimize the loss function $\mathcal{L}_r(\sV^l \cup \sV^p_{r-1})$. This loss captures the classification error of $f_r$ in predicting only the currently available labels, including $\vy^l$ for $\sV^l$, and $\hat{\vy}^p_{r-1}$ for $\sV^p_{r-1}$. Note that $\vy^l$ is the ground-truth labels, while $\hat{\vy}^{p'}_{s} \subset \hat{\vy}^{p}_{r-1}$ is predicted and determined by the corresponding teacher model $f_{s}$ for all $s \leq r-1$.
Minimizing this loss function is considered as an approximation of the original goal which is to compute the prediction loss over the entire data distribution $V$, i.e., $\mathcal{L}_r(V)$, $\sV \sim V$. However, the existing self-training frameworks use $\mathcal{L}_r(\sV^l \cup \sV^p_{r-1})$ as an intermediate approximate loss which often deviates from $\mathcal{L}_r(V)$ due to two major reasons: (i) the noisy pseudo-labels $\vy^p_{r-1} \neq \hat{\vy}^p_{r-1}$, and (ii) the non-random label selection strategy where the distribution of labeled nodes $\sV^l \cup \sV^p_{r-1}$ gradually shifts from the underlying distribution $V$. \textit{To mitigate these, we design a framework where the student model learns from the pseudo-labels that are not only correct but also help in reducing the uncertainty of the model over the entire data distribution.} %\sm{I would say don't use epistemic here and introduce it in the next para}

%In previous self-training procedures, the goal for the student model $f_r$ at round $r$ is to minimize the loss function $\mathcal{L}_r(\sV^l \cup \sV^p_{r-1})$. This loss captures the classification error of $f_r$ in predicting only the currently available labels, including $\vy^l$ for $\sV^l$, and $\hat{\vy}^p_{r-1}$ for $\sV^p_{r-1}$. Note that $\vy^l$ is the ground-truth labels, while $\hat{\vy}^{p'}_{s} \subset \hat{\vy}^{p}_{r-1}$ is predicted and determined by the corresponding teacher model $f_{s}$ for all $s \leq r-1$. \sm{F, why do we need this term $\hat{\vy}^{p'}_{s} $ in the previous sentence?} Minimizing this loss function is considered as an approximation of the final goal, prediction loss over the entire data distribution $V$, i.e., $l_r(V)$, $\sV \sim V$. However, $l_r(\sV^l \cup \sV^p_{r-1})$, as an intermediate training goal often deviates from $l_r(V)$ due to two major problems in the existing self-training frameworks: the noisy pseudo-labels $\vy^p_{r-1} \neq \hat{\vy}^p_{r-1}$, and due to non-random label selection strategy, the distribution of labeled nodes $\sV^l \cup \sV^p_{r-1}$ gradually shifts from the underlying distribution $V$. To mitigate these, we set up a framework where the student model learns from the pseudo-labels that are not only correct but also help reduce the epistemic uncertainty of the model over the entire data distribution.

\textbf{Entropy as a measure of uncertainty.} Shannon Entropy $H(\cdot)$ is commonly used to measure the uncertainty in a set of labeled nodes $\sS$ and denoted as:
$H(\sS) = - \sum^C_{c=1} p_c \log p_c$, where $p_c$ is the probability of class $c$ on $\sS$. Besides node set, we also use Entropy to measure the uncertainty of a single node $v_i$: $H(v_i) = - \sum^C_{c=1} p_c \log p_c$, where $p_c$ is the probability for the label of $v_i$ being class $c$. When $p_c$ of unlabeled nodes is estimated from the model predictions rather than being accessed from the ground-truth, $H(v_i)$ is refereed to as the \textit{individual prediction entropy} of node $v_i$.

% \textbf{Entropy as a measure of uncertainty.} Shannon Entropy $H(\cdot)$ is commonly used to measure the aleatoric uncertainty in a set of labeled nodes $\sS$ and denoted as:
% $H(\sS) = - \sum^C_{c=1} p_c \log p_c$, where $p_c$ is the probability of class $c$ on $\sS$. In this paper, we use it to measure the uncertainty of a single node $v_i$: $H(v_i) = - \sum^C_{c=1} p_c \log p_c$, where $p_c$ is the underlying probability for the label of $v_i$ to be class $c$. For the unlabeled data, since probability is predicted by the model, the entropy computation also includes epistemic uncertainty. 
%\sm{have we defined aleatrotic and epistemic? if not, we can define these two in the beginning of the paragraph} \fx{Maybe there is no need to distinguish aleatoric and epistemic in our paper?}

\textbf{Our problem. }We aim to quantify the following: \textit{After selecting pseudo-label node set in the $r$-th iteration, how much information do we gain about the unlabeled data?} 
Formally, let $I(\rvx_1;\rvx_2) = H(\rvx_1) - H(\rvx_1|\rvx_2)$ denote the expected mutual information (information gain) between variables $\rvx_1$ and $\rvx_2$. %In the $r$-th round, $k$ nodes are allocated to exploit and pseudo-label, and other unlabeled nodes remains for exploration.\sm{this sentence is not clear: In the $r$-th round, we chose only $k$ nodes to allocate pseudo-labels.}
In the $r$-th round, we choose only $k$ nodes to allocate pseudo-labels. Our aim is to select a node set $\sV^{p'}_{r}$ of size $k$, such that the future pseudo-labels ($\hat{\vy}^{p'}_{r}$)
maximize information between $\vy^u_{r-1}$, the ground-truth labels of unlabeled nodes and the prediction $\hat{\vy}^u_{r-1}$ by the student model $f_r$. %Mathematically, %our goal is as follows: %In addition, to avoid introducing noise, the pseudo labels propagated to labeled nodes $V^l$ should not contradict with their ground-truth labels $Y^l$. 
Mathematically, considering ${y}^u_{r-1}$ and $\hat{y}^u_{r-1}$, the ground-truth and prediction distribution of the unlabeled data respectively, our goal is as follows:

\begin{equation}
\label{eq: max MI}
    % \max_{ {\sV^{p'}_r} \subset \sV^u_{r-1}} \mathbb{E}_{ \vy^{p'}_r} [\sum_{v_i \in \sV^u_{r-1}}
    %  I ( y_i; {\hat{\vy}^{p'}_{r}} \vert \hat{\vy}^p_{r-1}, \gG) ], \text{   s.t. } |{\sV^p_r}'| = k.
    \max_{ {\sV^{p'}_r} \subset \sV^u_{r-1}} {\mathcal{O}}( {\hat{\vy}^{p'}_{r}}) = I ( {y}^u_{r-1} ;\hat{y}^u_{r-1} \vert  {\hat{\vy}^{p'}_{r}}, \hat{\vy}^p_{r-1}, \gG) ], \text{   s.t. } |{\sV^p_r}'| = k.
\end{equation}

\begin{lemma}
\label{lemma: MI and IPE}
    Maximizing the mutual information between unlabeled data distribution and its prediction distribution is roughly equivalent to simultaneously maximizing entropy over the unlabeled dataset and minimizing the sum of individual prediction entropy over all unlabeled nodes.
\end{lemma}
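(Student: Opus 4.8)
The plan is to expand the conditional mutual information of Equation~\ref{eq: max MI} into a difference of two entropy terms and identify each term with one of the two quantities named in the statement. Write $Z=(\hat\vy^{p'}_r,\hat\vy^p_{r-1},\gG)$ for the conditioning context, $Y=y^u_{r-1}$ for the ground-truth labels of the unlabeled nodes, and $\hat Y=\hat y^u_{r-1}$ for the student's prediction on them. By the standard identity $I(Y;\hat Y\mid Z)=H(\hat Y\mid Z)-H(\hat Y\mid Y,Z)$, the objective becomes
\begin{equation}
\mathcal{O}(\hat\vy^{p'}_r)=H(\hat Y\mid Z)-H(\hat Y\mid Y,Z).
\end{equation}
(The symmetric expansion $H(Y\mid Z)-H(Y\mid\hat Y,Z)$ works equally well: under a calibration assumption on $f_r$, a prediction makes $y_i$ distributed as the predictive distribution, so $H(Y\mid\hat Y,Z)$ reduces to the same per-node entropies; I would mention this as an alternative route.)

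First I would treat $H(\hat Y\mid Z)$. This is the entropy of the label distribution the student induces over the \emph{entire} unlabeled set; reading class probabilities off the pooled predictions, it equals---up to the per-sample normalization that is harmless when the unlabeled nodes are treated exchangeably---the set-level entropy $H(\sV^u_{r-1})$ in the paper's notation. Hence driving $\mathcal{O}$ up pushes $H(\sV^u_{r-1})$ up, which is the ``maximize entropy over the unlabeled dataset'' half of the claim.

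Next, for $H(\hat Y\mid Y,Z)$ I would invoke the working assumption that, once the ground-truth labels and the graph/pseudo-label context are fixed, the residual randomness in the student's per-node predictions is essentially independent across nodes---the prediction ``errors'' decorrelate even though the predictions themselves stay correlated through graph homophily. Under this assumption $H(\hat Y\mid Y,Z)=\sum_{i\in\sV^u_{r-1}}H(\hat y_i\mid y_i,Z)$, and since conditioning cannot increase entropy, $H(\hat y_i\mid y_i,Z)\le H(\hat y_i\mid Z)=H(v_i)$. Combining,
\begin{equation}
\mathcal{O}(\hat\vy^{p'}_r)\ \ge\ H(\sV^u_{r-1})-\sum_{i\in\sV^u_{r-1}}H(v_i),
\end{equation}
so maximizing the objective coincides, up to this slack, with jointly maximizing $H(\sV^u_{r-1})$ and minimizing $\sum_{i}H(v_i)$---the assertion of the lemma.

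The word ``roughly'' is then accounted for by the gap between $\mathcal{O}$ and this surrogate, namely the conditional total correlation of the predictions given the labels together with the per-node slack $H(v_i)-H(\hat y_i\mid y_i,Z)$; both are negligible exactly when the student's mistakes behave like independent label noise and the pooled-versus-joint normalization is benign. I expect this last point to be the main obstacle: turning the passage from $H(\hat Y\mid Y,Z)$ to $\sum_i H(v_i)$ into an approximate equality rather than a mere bound, and reconciling the joint entropy $H(\hat Y\mid Z)$ with the paper's pooled set entropy $H(\sV^u_{r-1})$, both require the exchangeability/decorrelation assumptions to be made precise. Everything else is a one-line application of the chain rule for mutual information together with monotonicity of conditional entropy.
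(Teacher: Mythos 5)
Your opening move---expanding the conditional mutual information as a difference of two entropies---is the same first step the paper takes, but the way you formalize the random variables opens two gaps that your own derivation cannot close. First, by taking $Y$ and $\hat Y$ to be the \emph{joint vectors} of labels and predictions over all unlabeled nodes, the term $H(\hat Y\mid Z)$ is a joint entropy, and the joint entropy of $|\sV^u_{r-1}|$ predictions is not the entropy of their pooled (averaged) class distribution: under exactly the kind of conditional independence you invoke it is close to the \emph{sum} of the individual prediction entropies, which would largely cancel against your second term rather than reproduce the set-level entropy $H(\hat y^u_{r-1}\mid\cdot)$ appearing in Equation~\ref{eq: max MI (simp)}. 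Second, your passage from $H(\hat Y\mid Y,Z)$ to $\sum_i H(v_i)$ is a one-sided inequality whose slack is precisely $\sum_i I(\hat y_i;y_i\mid Z)$, i.e.\ the per-node information between prediction and ground truth. That slack is of the same order as the objective itself---it is largest exactly when the predictions are informative---so the resulting bound $\mathcal{O}\geq H(\hat Y\mid Z)-\sum_i H(v_i)$ only says the surrogate underestimates the objective by an uncontrolled amount; it does not deliver the ``rough equivalence'' the lemma asserts.

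The paper's proof avoids both issues by formalizing the mutual information at the level of a single randomly drawn unlabeled sample rather than the joint vector (Appendix~\ref{sec: mutual information Estimation}): with $y$ the sample and $u$ its prediction, $p(u\mid y)$ is the model's softmax for that sample, and the exact identity $I(y;u)=H(\E_y[p(u\mid y)])-\E_y[H(p(u\mid y))]$ holds with no independence assumption and no inequality. The first term is the entropy of the averaged prediction distribution (the dataset-level entropy) and the second is the expected individual prediction entropy, so Equation~\ref{eq: max MI (simp)} follows directly; the only source of ``roughly'' is the plug-in step of replacing expectations over the data distribution by empirical averages over the unlabeled nodes. To rescue your route you would need either to adopt this single-sample formalization or to argue that the terms $I(\hat y_i;y_i\mid Z)$ are uniformly negligible, which is not an assumption the lemma can afford.
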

This lemma allows us to estimate unknown distribution with samples from data. Therefore, our new objective---which depends on entropy---becomes the following:
\begin{align}
\label{eq: max MI (simp)}
\begin{split}
    \max_{ {\sV^{p'}_r} \subset \sV^u_{r-1}} &  {\mathcal{O}}( {\hat{\vy}^{p'}_{r}}) \approx
     -\frac{1}{|\sV^u_{r-1}|} \sum \nolimits_{v_i \in \sV^u_{r-1}} H( \hat{y}_i \vert \hat{\vy}^{p'}_{r}, \hat{\vy}^{p}_{r-1}, \gG) + H( \hat{y}^u_{r-1} \vert \hat{\vy}^{p'}_{r}, \hat{\vy}^{p}_{r-1}, \gG), \\
     % + & I ( {y}^u_{r-1}; {\hat{\vy}^{p'}_{r}} \vert  \hat{\vy}^p_{r-1}, \gG) ,  
     & \text{   s.t. } |{\sV^p_r}'| = k, \text{     }\hat{y}^u_{r-1} = \frac{1}{|\sV^u_{r-1}|} \sum \nolimits_{v_i \in \sV^u_{r-1}} \hat{y}_i.
\end{split}
\end{align}

The proof is in Appendix \ref{sec: mutual information Estimation}.
In other words, to let predicted labels have more information about ground-truth, the first term demands confident prediction on each sample, while the second term encourage diversity of prediction label distribution of unlabeled data. Solving this objective is non-trivial and has specific challenges as follows.\\
\textbf{C1.} The ground-truth distribution of labels ${y}^u_{r-1}$ is unknown for estimating the problem objective in Equation \ref{eq: max MI}. We estimate it with the unlabeled samples by Equation \ref{eq: max MI (simp)}.\\
\textbf{C2.} At the $r$-th round, we only have the predictions on the unlabeled data from the teacher model, $f_{r-1}$. In Equation \ref{eq: max MI (simp)}, $\hat{y}^u_{r-1}$ represents the predictions made by the student model $f_{r}$, which may be inconsistent when compared to the teacher model’s predictions, $f_{r-1}$, since the teacher model has not been trained on the newly added labels, ${\hat{\vy}^{p'}_{r}}$. Thus, a key challenge is to estimate the student model's predictions based on the current teacher model efficiently and accurately (Section \ref{2. Propogation}).\\
\textbf{C3.} How do we account for the dependency between $\hat{\vy}^{p'}_{r}$, nodes that are being selected during each round? This dependency comes from the combinatorial nature of the objective and the graph---how much information the selected nodes \textit{collectively} provide about the unlabeled ones (Section \ref{1. Banzhaf}).

\section{Our Method: \ourmodel}

%\sm{Write a generic description}

\begin{figure}[h]
    \centering
    \includegraphics[width=0.95\linewidth]{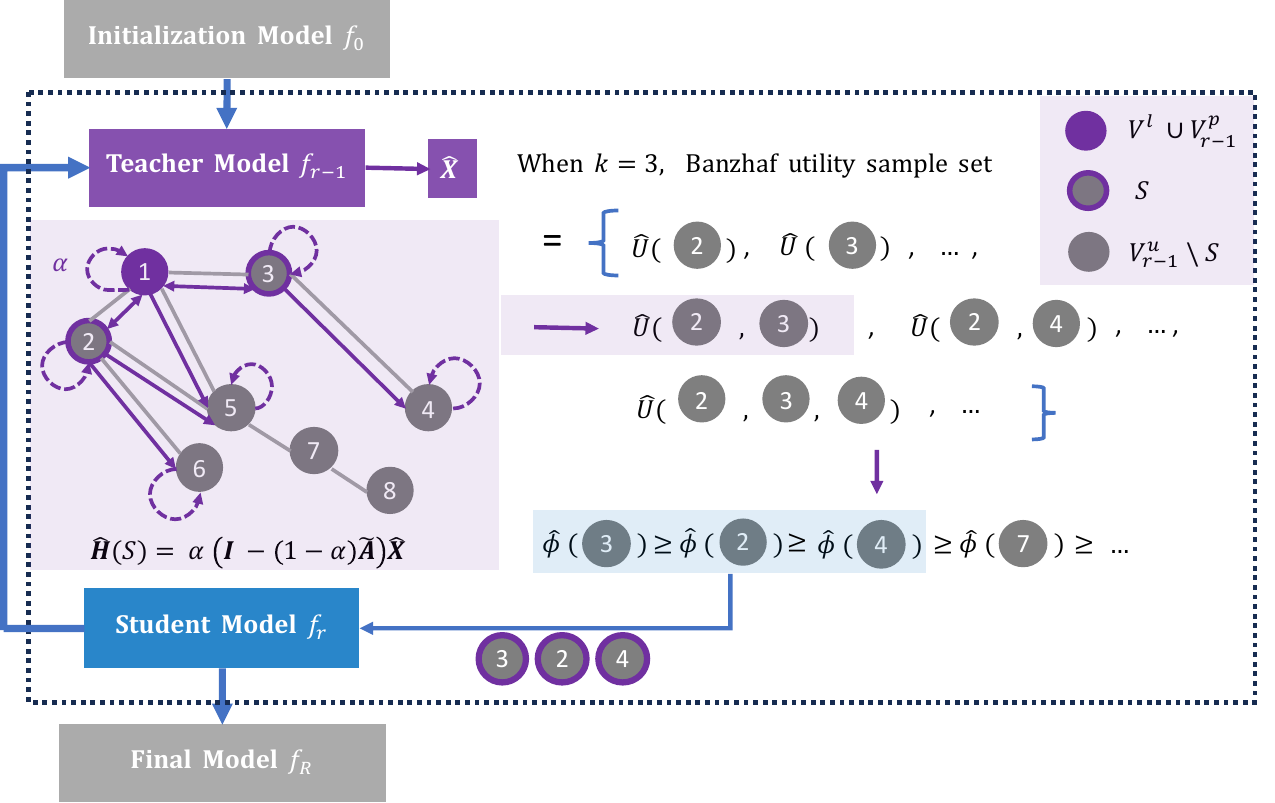}
    \caption{The workflow of \ourmodel. We utilize the teacher model's predictions to propagate features from both labeled nodes and candidate set $\sS$, estimating the logits of unlabeled nodes and the Banzhaf value of set $\sS$ (Equation \ref{eq: banz}). For simplicity, only the first step of feature propagation is shown. Using Banzhaf values we rank the individual contributions of unlabeled nodes and add the top $k$ into the pseudo-label set. The student model is subsequently retrained using this updated set.}
    %\sm{is it possible to add the banzhaf eq in the figure or at least point out the eq number  (Eq 3)}}
    \label{fig: main}
\end{figure}
% \sm{are we adding the method name in the title?}
In this section, we introduce our framework \ourmodel \textbf{(BAnzhaf value-based Node-selection for Graph Self training)}, which consists of three major components unifying several concepts for self-training. \textit{First,} we propose to measure the mutual information between labels and predictions through feature propagation from currently available labels and pseudo-labels to all the unlabeled nodes. \textit{Second,} we apply Banzhaf values \citep{owen1975multilinear} from the co-operative game theory literature to compute the pseudo-label contribution for our self-training objective, taking into account the combinatorial dependencies of the nodes. \textit{Third,} we provide scalability design and complexity analysis of \ourmodel. The pseudo-code and implementation details are in Appendix \ref{sec: algo detail} and \ref{sec: exp details}.

%\sm{self note: checked till here}
\subsection{Feature Estimation with Propagation}
\label{2. Propogation}

A major challenge in selecting node at round $r$ is having access to only the teacher model $f_{r-1}$ but not the student model $f_r$. An alternative is to estimate the true utility (information gain) of selecting the node set ${\sV^p_r}'$ to train $f_r$ using the predictions from $f_{r-1}$. First, we introduce a feature propagation mechanism to estimate the unknown student model predictions accurately and efficiently.

%To estimate the unknown student model prediction efficiently and precisely, we introduce feature propagation. 

In a $L$-layer GNN, labeled node $v_i$ can propagate its label information to $l$-hop neighbors, $l \leq L$. Therefore, pseudo-labeling node $v_i$ will influence the distribution and entropy of its neighbor nodes. In message-passing GNNs~\citep{kipf2016semi,velickovic2017graph}, the influence of node $v_i$ on its neighbor $v_j$ depends on the graph structure. We measure the influence by computing how much the change in the input feature of node $v_i$, $\vx_i^{(0)}$, affects the learned hidden feature of node $v_j$, i.e., $\vx_j^{(l)}$ after $l$-step propagation~\citep{xu2018representation}. 
\begin{definition}[Feature influence distribution]
    \label{def: influence magnitude}
    The influence score of node $v_i$ on node $v_j$ after $l$-step propagation, $\hat{I}_f (i \rightarrow j,l)$, is defined as the sum of the absolute values of the expected Jacobian matrix $[ | \mathbb{E} [\frac{\partial \vx_j^{(l)}}{\partial \vx_i^{(0)}} ] | ]$. The feature influence distribution of node $v_j$ is defined by normalizing over all nodes that affect node $v_j$ as follows: $I_f (i \rightarrow j,l) = \frac{\hat{I}_f (i \rightarrow j,l)}{\sum_{v_z\in \sV} \hat{I}_f (z \rightarrow j,l) }$.
\end{definition}
Under certain assumptions, the feature influence magnitude depends only on the graph structure and independent of the GNN model architecture, and it can be estimated via random walks.
%\sm{please check this para - I have updated it}
\begin{assumption}
\label{assump: gnn}
    The $L$-layer GNNs have linear and graph structure dependent aggregation mechanism, and ReLU activation. All paths in the computation graph of the model are activated with the same probability of success.
\end{assumption}

\begin{theorem} [Feature influence computation via random walk]
    \label{theom: connection}
    With Assumption \ref{assump: gnn}, $I_f (i \rightarrow j,L)$, the influence distribution of any node $v_i \in \sV$ is nearly equivalent, in expectation to, the $L$-step random walk distribution on $\gG$ starting at node $v_j$.
\end{theorem}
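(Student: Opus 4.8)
The plan is to adapt the computation-graph unrolling argument of \citep{xu2018representation}: write the Jacobian appearing in Definition \ref{def: influence magnitude} as a sum over walks in $\gG$, take the expectation over the random activation pattern so that the model-dependent factors (weights and activation rate) separate cleanly from the purely structural factors, and then observe that the former cancel in the normalization defining $I_f$. First I would instantiate Assumption \ref{assump: gnn} by writing the $l$-th layer as $\vx_v^{(l)} = \mathrm{ReLU}\bigl(\mW^{(l)}\sum_{u\in\sV}\tilde{\emA}_{vu}\,\vx_u^{(l-1)}\bigr)$, where $\tilde{\mA}$ is the structure-determined propagation matrix (e.g.\ $\mD^{-1}\mA$ or $\mD^{-1/2}\mA\mD^{-1/2}$, with $\mD$ the degree matrix) and $\mW^{(1)},\dots,\mW^{(L)}$ are the learned weights. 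By the chain rule, each entry $\partial(\vx_j^{(L)})_a/\partial(\vx_i^{(0)})_b$ is a sum over directed paths in the GNN computation graph from input coordinate $(i,b)$ to output coordinate $(j,a)$; every such path projects to a length-$L$ walk $p=(j=v_{p_0},v_{p_1},\dots,v_{p_L}=i)$ in $\gG$ and contributes $\bigl(\prod_{l=1}^{L}\tilde{\emA}_{v_{p_{l-1}}v_{p_l}}\bigr)\cdot(\text{product of weight entries along the path})\cdot\mathbbm{1}[\text{path active}]$.

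Next I would take the expectation. By Assumption \ref{assump: gnn} every computation-graph path is active with the same probability $\rho$, so $\mathbbm{1}[\cdot]$ is replaced by $\rho$; summing the weight products over the coordinate choices compatible with a fixed walk $p$ collapses them to the walk-independent quantity $(\mW^{(L)}\cdots\mW^{(1)})_{ab}$, and the sum over length-$L$ walks from $v_j$ to $v_i$ of $\prod_{l=1}^{L}\tilde{\emA}_{v_{p_{l-1}}v_{p_l}}$ is exactly $(\tilde{\emA}^{L})_{ji}$. Hence
\[
  \mathbb{E}\!\left[\frac{\partial(\vx_j^{(L)})_a}{\partial(\vx_i^{(0)})_b}\right] \;=\; \rho\,(\mW^{(L)}\cdots\mW^{(1)})_{ab}\,(\tilde{\emA}^{L})_{ji},
\]
and taking entrywise absolute values and summing over $a,b$ gives $\hat{I}_f(i\to j,L) = c\cdot(\tilde{\emA}^{L})_{ji}$, where $c := \rho\sum_{a,b}\bigl|(\mW^{(L)}\cdots\mW^{(1)})_{ab}\bigr|$ depends on the model but not on the source node $v_i$.

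Finally, since $c$ is common to all sources $v_z$, it cancels in the normalization of Definition \ref{def: influence magnitude}:
\[
  I_f(i\to j,L)\;=\;\frac{\hat{I}_f(i\to j,L)}{\sum_{v_z\in\sV}\hat{I}_f(z\to j,L)}\;=\;\frac{(\tilde{\emA}^{L})_{ji}}{\sum_{z}(\tilde{\emA}^{L})_{jz}}.
\]
When $\tilde{\mA}=\mD^{-1}\mA$ is row-stochastic this is already normalized and equals the probability that an $L$-step random walk started at $v_j$ is at $v_i$, i.e.\ the $L$-step random-walk distribution from $v_j$, which proves the claim; for the symmetric normalization the rows of $\tilde{\mA}^{L}$ fail to sum to $1$ only through degree factors, so the normalized quantity agrees with the random-walk distribution up to this degree discrepancy --- the source of the word ``nearly'' in the statement. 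The step I expect to be the main obstacle is making the expectation rigorous: one must justify that, under the equal-activation assumption, the expected path contribution factors as $\rho\cdot(\text{structural walk sum})\cdot(\text{weight product})$ \emph{before} the entrywise absolute value is applied, so that the model-dependent factor can be pulled out and cancelled, and then verify that the absolute value introduces no residual dependence on $i$ beyond the structural walk sum; quantifying (or explicitly discarding) the degree gap in the symmetric-normalization case is the remaining technical point.
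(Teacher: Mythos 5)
Your proposal is correct and follows essentially the same route as the paper's proof: unroll the Jacobian as a sum over computation-graph paths (per Xu et al./Choromanska et al.), use the equal-activation assumption to replace the ReLU indicators by $\rho$, factor the expectation into a structural walk sum times a node-independent weight term, and let that term cancel in the normalization of $I_f$. Your version is marginally more explicit in collapsing the weight sum to $(\mW^{(L)}\cdots\mW^{(1)})_{ab}$ and in attributing the word ``nearly'' to the non-row-stochastic normalization, but the argument is the same; note also that Definition \ref{def: influence magnitude} takes the absolute value \emph{after} the expectation, so the concern you flag about pulling factors out before the absolute value does not actually arise.
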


The proof is in Appendix \ref{Sec: connection proof}. Theorem \ref{theom: connection} implies that the more likely $v_j$ walks to $v_i$ after $L$ steps, the stronger raw features $\vx_i^{(0)}$ will influence the final representation $\vx_j^{(l)}$ in a $L$-layer GNN. 
During the forward propagation, node features are distributed across $L$ steps by the $L$-layer GNN, while the GNN is also trained via backpropagation. In this process, the gradient flows through the propagation mechanism, effectively accounting for an infinite number of neighborhood aggregation layers during the weight update.
%While node features propagate during forward propagation, the model is also trained through backpropagation, where the gradient flows through the propagation scheme, implicitly considering infinitely many neighborhood aggregation layers through weight update \sm{quick note - make sure not to use the exact sentence though}. 
When $L \rightarrow \infty$, the random walk distribution is equivalent to personalized PageRank (PPR) \citep{gasteiger2018predict,gasteiger2022influence}. 
Next, using PPR, we estimate the student model output by summing up the feature influence of all labeled $\sV_l$, pseudo-labeled nodes $\sV^p_{r-1}$, and the new selection set ${\sV^p_r}'$ at round $r$. %\sm{please check this para - I have updated it}

%Given the weight parameters $\mW^{(l)}$, $l \in [L]$, in the trained GNN $f_{r}$, we can obtain the output logits by $f_{r}$.
%Based on this, we calculate the true output logits ${x}_{i,d_1}^{(L)}$ by student model $f_r$.

% Then, let
% \begin{equation}
%     \hat{x}_{i,d_1}^{(L)} = 
%     \begin{cases}
    
%     \widetilde{x}_{i,d_1}^{(L)},      & \text{if } v_i \in {{\sV^p_r}' \cup \sV^p_{r-1}}, \\
%     (\max_{v_i \in {{\sV^p_r}' \cup \sV^p_{r-1} }}{  \widetilde{x}_{i,d_1}^{(L)} } ) t_{i,d_1} ,& \text{o.w. } 
%     \end{cases}
% \end{equation}

% Here, $\widetilde{x}_{i,d_1}^{(L)}$ is the output logit by trained teacher model $f_{r-1}$, while $\hat{h}_{j,d_1}$ denotes our estimation for logit by the untrained student model $f_r$.
% For $ v_i \in {\sV_l} $, $t_{i,d_1}$ is the one-hot vector value at the $d_1$ position. We multiply them to the same scale as $\widetilde{x}_{i,d_1}^{(L)}$ for all $v_i \in {{\sV^p_r}' \cup \sV^p_{r-1}}$. %In other words, we set the logits of ground-truth class as the maximum value in all $\widetilde{x}_{i,d_1}^{(L)}$ for $v_i \in {{\sV^p_r}' \cup \sV^p_{r-1}}$.
\begin{definition}[Output feature estimation with propagation]
\label{def: feature estimation with propagation} 
    With ${\sV^p_r}'$ selected, the student model $f_{r}$ logit estimation $\hat{\mH}({\sV^p_r}') = \alpha (\mI - (1-\alpha) \Tilde{\mA} )^{-1} \hat{\mX}$, where $\alpha \in (0,1]$ is a teleportation probability, and $\Tilde{\mA} \in \mathbb{R}^{N \times N}$ is the normalized adjacency matrix with added self-loop. Further, $\hat{\mX} \in \mathbb{R}^{N \times C}$ is a matrix that stores prediction logits of the teacher model $f_{r-1}$: for $v_j \in \sV_l \cup \sV^p_{r-1} \cup{\sV^p_r}'$, the $j$-th row $\hat{\mX}_{j, :}$ stores logits of $f_{r-1}$ for $v_j$; otherwise, $\hat{\mX}_{j, :} = \mathbf{0}$.
    % The output feature by model prediction in round $r$ with selected ${\sV^p_r}'$, $\hat{h}^{(L)}({\sV^p_r}') = \sum_{v_i \notin {\sV_u^r} } \hat{x}_{i,d_1}^{(L)} P^{j \rightarrow i}_L$.
    
    % $\hat{h}_j^{(L)}({\sV^p_r}') = \sum_{v_i \in {\sV_l} } I_l (i \rightarrow j,h) \vy_i + \sum_{v_i \in {{\sV^p_r}' \cup \sV^p_{r-1}} } I_l (i \rightarrow j,h) \hat{\vy}_i$.
\end{definition}

We only propagate labeled and pseudo-labeled node logits to simulate back-propagation, as the loss function involves only the already labeled nodes in our framework. A theoretical case study on 1-layer GNN is provided in Appendix \ref{sec: case study 1-layer GNN}. 
Based on this definition, we derive the final output feature estimation for the unlabeled nodes by student model $f_r$, which is subsequently processed through the softmax function (Equation \ref{eq: softmax}) followed by the entropy function to produce the final entropy. Then, we can easily estimate the conditional mutual information between $\hat{y}^u_{r-1}$ and $y^u_{r-1}$ through Equation \ref{eq: max MI (simp)}.
This estimation captures the contribution of pseudo-labels to all unlabeled data, providing a global perspective and fast approximation without requiring model retraining. Relying on the relation between the PPR and the random walk, the output feature estimation can be implemented efficiently. 
A detailed complexity analysis is provided in Section \ref{3. Complexity}.

\subsection{Modeling Interdependencies through Banzhaf Values}
\label{1. Banzhaf}
To select a potentially good labeling set of nodes for self-training, it is crucial to compute the importance of the set based on the individual contribution of the nodes towards the final classification objective. However, the confidence of a node computed by the model might be dependent on the confidence of other nodes \citep{liu2022calibration}. Intuitively, as a generalization, adding a pseudo-labeled node to the labeling set would affect the contribution of its neighborhood nodes. We show that the independent node selection procedure does not provide the optimal solution for Equation \ref{eq: max MI} both theoretically (Appendix \ref{independent selection}) and empirically (Section \ref{2. ablation}).

%For better node selection, we propose to rank the nodes for selection to reflect their individual contribution to model training enhancement. We take inspiration from the observation by \cite{liu2022calibration}, that neighborhood node confidence is related. As a generalization, it is fair to assume that adding a pseudo-labeled node to the labeling set would affect the contribution of its neighborhood nodes. We show that independent node selection does not provide optimal solution for Equation \ref{eq: max MI} both theoretically (Section \ref{independent selection}) and empirically (Section \ref{2. ablation}).

To capture the combinatorial dependency of the contributions of the nodes, we adopt a game theory-based measure, namely Banzhaf value~\citep{wang2023data}. 
It assesses the contribution of each player (node) to the success of a coalition (node set), by measuring how pivotal their presence is for achieving a specific outcome, i.e., accuracy improvement in the next round in our setting. 
%\sm{I think the next sentence is not needed here - also we need 2-3 sentences of high-level description of banzhalf values}Banzhaf values are proven to generate a more robust set of nodes even with noisy utility functions. 
As we only select a set of size $k$, we define a variant of Banzhaf values where we consider the coalitions of size up to $k$:

\begin{definition} [k-Bounded Banzhaf value]
\label{def: threshold banzhaf}
The k-Bounded Banzhaf value for node $i \in \sV^u_{r-1}$ with utility function $U$ is defined as: %\sm{we are saying thresholded but dividing with all possible coalitions?]} \fx{$n_s$ is the total number of thresholded coalitions, only sum up to k-size coalitions} \sm{updated the last line of the definition}

\begin{equation}
\label{eq: banz}
    \phi (i;U,\sV^u_{r-1}) := n_s^{-1} \sum \nolimits_{\sS \subseteq \sV^u_{r-1} \backslash v_i, |\sS| = m-1} [ U(\sS \cup v_i) - U(\sS)],
\end{equation}
where $n_s =  \sum_{m=1}^{k} {|\sV^u_{r-1}|-1 \choose m-1}$ denotes the number of all possible coalitions up to size $k$.
\end{definition}

Here $\sS$ is a subset of nodes of $\sV^u_{r-1}$ excluding node $v_i$, $\sS \cup v_i$ denotes the union set of $\sS$ and node $v_i$. $U (\sS)$ denotes the utility function of $\sS$, which maps the set $\sS$ to an importance score of the subset $\sS$. Breaking down into individual nodes, the value of $\phi (i;U,\sV^u_{r-1})$ measures the marginal contribution of node $v_i$ and can be used to ranks all the candidate nodes. 

\textbf{Designing utility function. } Next, we design a relevant utility function to measure the total contribution of pseudo-labels in $\sS$. A common choice for the utility function $U(\sS)$ is the prediction accuracy on a random hold-out subset of labeled nodes, which serves as an approximation of the contribution of subset $\sS$ to the overall accuracy on $\sV$ \citep{wang2023data}. 
However, this utility function is often neither effective nor efficient. \textit{First}, the self-training task allows for only a few labels for validation; and \textit{second}, accessing accuracy requires either retraining the model or approximating the influence function, which is time-consuming due to the exponential number of possible combinatorial sets.

%In the original data evaluation task, all the node labels are known and considered as ground truth. A common choice of the utility function $U (S)$ is the prediction accuracy on a random hold-out subset of labeled nodes, which serves as an approximation of the subset $S$ contribution to overall accuracy on $V$.

%However, this utility function is often not faithful and efficient: on one hand, self-training task allows for a few labels for validation; on the other hand, accessing accuracy requires either retraining model or approximation of influence function, which is time-consuming considering the exponential number of possible combination sets.

To reflect the contribution of a subset to the overall performance in the \textit{next} round of the self-training procedure, we follow the definition from the previous section. Specifically, 
\begin{equation}
\label{eq: banz_util}
\hat{U}(\sS) = {\mathcal{O}}( Softmax( \hat{\mH}({\sS}) ) ) 
\end{equation}where ${\mathcal{O}}$ is illustrated in Equation \ref{eq: max MI (simp)} and prediction logits of unlabeled nodes are estimated through Def. \ref{def: feature estimation with propagation}. $\hat{U}(\sS)$ measures the total information propagated to unlabeled nodes by already labeled nodes $\sV_l \cup \sV^p_{r-1}$ and pseudo-labels of sampled set $\sS$.

\textbf{Robust ranking under noisy utility function}. 
Ideally a method should preserve the optimal ranking order of nodes even with noisy utility function $\hat{U}(\cdot)$. Here, the noise arises due to the difference between our estimation $\hat{\mH}({\sS})$ and the ground-truth student model logits $\Tilde{\mX}$. Let $\phi$ and $\hat{\phi}$ denote the Banzhaf values from the ground truth utility $U(\cdot)$ and the approximated (or noisy) utility $\hat{U}(\cdot)$, respectively. In the following, we prove that our computation of Banzhaf values preserve the ranking of nodes under noisy utility function $\hat{U}(\cdot)$.

According to Def. \ref{def: threshold banzhaf}, the difference between the Banzhaf value of node $v_i$ and $v_j$ is given by $D_{i,j}(U) = k (\phi(i) -\phi(j) )$. $U$ and $\hat{U}$ will produce reverse order of Banzhaf value for node $v_i$ and $v_j$ if and only if $D_{i,j}(U)D_{i,j}(\hat{U}) \leq 0$. We can view $U \in \mathbb{R}^{n_s}$ as a vector that maps all coalitions to corresponding utility values. Further, using the definition of $\phi$ in Def. \ref{def: threshold banzhaf}, we can rewrite the following:

\begin{equation}
    D_{i,j}(U)= \frac{k}{n_s} \sum_{m=1}^{k-1} {|\sV^u_{r-1}|-2 \choose m-1} \Delta_{i,j}^{(m)} (U) ,
\end{equation}

where $\Delta_{i,j}^{(m)} (U) := {|\sV^u_{r-1}|-2 \choose m-1}^{-1} \sum \nolimits_{\sS \subseteq \sV^u_{r-1} \backslash \{v_i, v_j \}, |\sS| = m-1} [U(\sS \cup v_i) - U(\sS \cup v_j)]$ represents the average distinguishability between any unlabeled node $v_i$ and $v_j$ on size $m$ selection sets using the original utility function $U$.

%According to Def. \ref{def: threshold banzhaf}, the difference between the Banzhaf value of node $v_i$ and $v_j$ is given by $D_{i,j}(U) = k (\phi(i) -\phi(j) )$. $U$ and $\hat{U}$ will produce reverse order of Banzhaf value for node $v_i$ and $v_j$ if and only if $D_{i,j}(U)D_{i,j}(\hat{U}) \leq 0$. We can view $U \in \mathbb{R}^{n_s}$ as a vector that maps all coalitions to corresponding utility values. Further, using the definition of $\phi$ in Def. \ref{def: threshold banzhaf}, we can rewrite $D_{i,j}(U)= \frac{k}{n_s} \sum_{m=1}^{k-1} {|\sV^u_{r-1}|-2 \choose m-1} \Delta_{i,j}^{(m)} (U) $, where $\Delta_{i,j}^{(m)} (U) := {|\sV^u_{r-1}|-2 \choose m-1}^{-1} \sum \nolimits_{\sS \subseteq \sV^u_{r-1} \backslash \{v_i, v_j \}, |\sS| = m-1} [U(\sS \cup v_i) - U(\sS \cup v_j)]$ represents the average distinguishability between any unlabeled node $v_i$ and $v_j$ on size $m$ selection sets using the noiseless utility function $U$. 

In the next theorem, we show that when the value of utility between adding $v_i$ and $v_j$ is sufficiently large, and the estimation error in our utility function is moderate, the ranking of $v_i$ and $v_j$ remains robust even under noisy utility estimation $\hat{U}$. The proof is in Appendix \ref{sec: banzhaf proof}.

\begin{theorem}
    \label{theo: banzhaf}
    When the distinguishability of ground-truth utility between $v_i$ and $v_j$ on coalitions less than size $k$ is large enough, $\min \nolimits_{m \leq k-1} \Delta_{i,j}^{(m)} (U) \geq \tau$, and the perturbation in utility functions is small such that $\left\| \hat{U} - U \right\|_2 \leq \tau \sqrt{\sum_{m=1}^{k-1} {|\sV^u_{r-1}|-2 \choose m-1} }$, then $D_{i,j}(U)D_{i,j}(\hat{U}) = (\hat{\phi}(i)-\hat{\phi}(j) ) ({\phi}(i)-\phi(j) ) \geq 0$. 
\end{theorem}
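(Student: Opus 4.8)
The plan is to exploit the linearity of $U \mapsto D_{i,j}(U)$ and reduce the claim to a single scalar comparison. Each k-Bounded Banzhaf value $\phi(\cdot)$ is a fixed weighted sum of the coordinates of the utility vector $U$, so $D_{i,j}(U) = k(\phi(i)-\phi(j))$ is linear in $U$; writing $E := \hat U - U$ for the perturbation (so $\|E\|_2 = \|\hat U - U\|_2$), I would use $D_{i,j}(\hat U) = D_{i,j}(U) + D_{i,j}(E)$ and aim to show that $D_{i,j}(U) > 0$ while $|D_{i,j}(E)| \le D_{i,j}(U)$. Together these give $D_{i,j}(\hat U) \ge 0$, hence $D_{i,j}(U)D_{i,j}(\hat U) \ge 0$; and since $D_{i,j}(U)=k(\phi(i)-\phi(j))$ and $D_{i,j}(\hat U)=k(\hat\phi(i)-\hat\phi(j))$, this is exactly the assertion that $\hat U$ does not flip the Banzhaf ordering of $v_i$ and $v_j$. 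I would assume $\tau>0$ throughout (if $\tau=0$ the perturbation hypothesis forces $\hat U=U$ and there is nothing to prove) and abbreviate $a := \sum_{m=1}^{k-1}\binom{|\sV^u_{r-1}|-2}{m-1}$.

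For $D_{i,j}(U)>0$ I would substitute $\min_{m\le k-1}\Delta_{i,j}^{(m)}(U)\ge\tau$ into the decomposition $D_{i,j}(U)=\tfrac{k}{n_s}\sum_{m=1}^{k-1}\binom{|\sV^u_{r-1}|-2}{m-1}\Delta_{i,j}^{(m)}(U)$ stated just above and bound each summand, giving $D_{i,j}(U)\ge\tfrac{k\tau}{n_s}\,a>0$. For $|D_{i,j}(E)|$ I would apply the same decomposition to $E$ and expand $\Delta_{i,j}^{(m)}$; the binomial prefactors cancel, leaving $D_{i,j}(E)=\tfrac{k}{n_s}\sum_{m=1}^{k-1}\sum_{\sS\subseteq\sV^u_{r-1}\setminus\{v_i,v_j\},\,|\sS|=m-1}(E(\sS\cup v_i)-E(\sS\cup v_j))$. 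The coalitions $\sS\cup v_i$ and $\sS\cup v_j$ appearing here are pairwise distinct over all admissible $(m,\sS)$ — each has size at most $k-1$ and contains exactly one of $v_i,v_j$ — so they index $2a$ distinct entries of $E$, and $D_{i,j}(E)=\tfrac{k}{n_s}\langle\mathbf c,E\rangle$ for a vector $\mathbf c$ with entries in $\{-1,0,+1\}$ supported on precisely those coalitions. Cauchy--Schwarz then gives $|D_{i,j}(E)|\le\tfrac{k}{n_s}\|\mathbf c\|_2\,\|E\|_2=\tfrac{k}{n_s}\sqrt{2a}\,\|E\|_2$, which is where the factor $\sqrt{a}=\sqrt{\sum_{m=1}^{k-1}\binom{|\sV^u_{r-1}|-2}{m-1}}$ of the hypothesis enters. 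Feeding in $\|\hat U-U\|_2\le\tau\sqrt a$ makes $|D_{i,j}(E)|$ and the lower bound $\tfrac{k\tau}{n_s}a$ on $D_{i,j}(U)$ comparable up to an absolute constant which, once tracked (see below), yields $|D_{i,j}(E)|\le D_{i,j}(U)$ and closes the argument.

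The main obstacle — essentially the only nontrivial step — is this last upper bound on $|D_{i,j}(E)|$: I must verify that the coalitions entering $D_{i,j}(E)$ really are distinct coordinates of the utility vector (so the Cauchy--Schwarz / Parseval step is legitimate), count them correctly, and keep track of the constant, since the two-sided difference $E(\sS\cup v_i)-E(\sS\cup v_j)$ contributes a factor — from $\|\mathbf c\|_2=\sqrt{2a}$, equivalently from $(E(\sS\cup v_i)-E(\sS\cup v_j))^2\le 2E(\sS\cup v_i)^2+2E(\sS\cup v_j)^2$ — that has to be reconciled with the stated radius $\tau\sqrt a$. Everything else is routine bookkeeping with the decomposition of $D_{i,j}$ that is already in hand.
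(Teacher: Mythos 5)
Your plan is essentially the paper's own argument in additive form: the paper writes $D_{i,j}(U)D_{i,j}(\hat U)=U^\top\va\va^\top\hat U$ and controls the cross term $U^\top\va\va^\top(\hat U-U)$ by Cauchy--Schwarz, which is exactly your decomposition $D_{i,j}(\hat U)=D_{i,j}(U)+D_{i,j}(E)$ with $|D_{i,j}(E)|\le\frac{k}{n_s}\|\mathbf{c}\|_2\|E\|_2$. The gap is precisely the step you flag and then defer. With your (correct) count, the $2a$ coalitions $\sS\cup v_i$ and $\sS\cup v_j$ are distinct coordinates, so $\|\mathbf{c}\|_2=\sqrt{2a}$, and the hypothesis $\|E\|_2\le\tau\sqrt a$ yields only $|D_{i,j}(E)|\le\sqrt2\,\frac{k}{n_s}a\tau$, whereas the distinguishability hypothesis guarantees only $D_{i,j}(U)\ge\frac{k}{n_s}a\tau$. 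The promised reconciliation does not exist inside this argument: Cauchy--Schwarz is tight here, and taking $U(\sT)=\tau$ exactly when $v_i\in\sT,\ v_j\notin\sT$ (so every $\Delta^{(m)}_{i,j}(U)=\tau$) together with $E=-\tau\sqrt a\,\mathbf{c}/\|\mathbf{c}\|_2$ satisfies both hypotheses, yet gives $D_{i,j}(\hat U)=\frac{k}{n_s}a\tau(1-\sqrt2)<0$ while $D_{i,j}(U)>0$. So, under the natural reading in which $\|\hat U-U\|_2$ is the Euclidean norm of the coalition-indexed utility vector, your closing inequality $|D_{i,j}(E)|\le D_{i,j}(U)$ cannot be established at the stated radius; your route closes only under the smaller radius $\tau\sqrt{a/2}$, i.e.\ after dividing the stated tolerance by $\sqrt2$.

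For what it is worth, your discomfort points at a slip in the paper's proof rather than at a trick you are missing. There, the same $\sqrt2$ is absorbed by computing $\va^\top\va=\sum_{\sS}(k/n_s)^2=a(k/n_s)^2$, i.e.\ one coordinate per generating subset $\sS$, even though the vector $\va$ that pairs with the coalition-indexed utility vector (the one whose norm appears in $\|\hat U-U\|_2$) has $2a$ nonzero entries and hence $\va^\top\va=2a(k/n_s)^2$. With consistent bookkeeping, both your route and the paper's prove the conclusion under $\|\hat U-U\|_2\le\tau\sqrt{\tfrac12\sum_{m=1}^{k-1}\binom{|\sV^u_{r-1}|-2}{m-1}}$; the radius as stated is recovered only under the convention that each paired difference $U(\sS\cup v_i)-U(\sS\cup v_j)$ counts as a single coordinate of the perturbation. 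In short: same approach as the paper, but as written your proof does not close at the stated constant, and the honest fix is to tighten the radius by $\sqrt2$ (or to state explicitly the norm convention under which $\|\mathbf{c}\|_2=\sqrt a$).
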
 
% \begin{proof}
%     The proof of Theorem \ref{theo: banzhaf} is in Appendix \ref{sec: banzhaf proof}. 
% \end{proof}
\textbf{Ranking nodes for self-training selection}. 
Recall that our goal is to select a set of $k$ nodes that maximizes the objective in Equation \ref{eq: max MI (simp)}, approximated by our utility function $\hat{U}$.
%\sm{this might be confusing - say our goal is to select a set of k nodes that maximizes the utlity functio or our objective --- something like that }. 
Next we guarantee that our selection of top-k nodes even with $\hat{U}$ is within an error bound. Let us denote $\textbf{Top}(\phi,k)$ as the top $k$ node set ranked with $\phi$. Based on our objective, we only require $D_{i,j}(U)D_{i,j}(\hat{U}) \geq 0$ for top-$k$ node $v_i$ and non-top-$k$ node $v_j$. According to Theorem \ref{theo: banzhaf}, besides $\left\| \hat{U} - U \right\|_2 \leq \tau \sqrt{\sum_{m=1}^{k-1} {|\sV^u_{r-1}|-2 \choose m-1} }$, we only need $\min \nolimits_{m \leq k} \Delta_{i,j}^{(m)} (U) \geq \tau$ for node pairs in $\{ (v_i,v_j) | v_i \in \textbf{Top}(\phi,k), v_j \notin \textbf{Top}(\phi,k)\}$ to ensure $\textbf{Top}(\phi,k) = \textbf{Top}(\hat{\phi},k)$. This is a less stricter condition, which only requires the utility function to distinguish between selected nodes and non-selected nodes, instead of any two nodes. %\sm{how about this?- I removed the phrase}

In conclusion, our Banzhaf value is tailored for the node self-training setting: 1) it adjusts the threshold for the size of coalitions based on the number of pseudo-labels to add, and 2) it relaxes the condition for ranking preservation without requiring comparison between all pairs of nodes. %\sm{(2) can you amke it clearer?}.

% Intuitively, selecting the top $k$ is more robust to noise than ranking all the nodes correctly. In our problem, the ranking inside top $k$ is exchangeable. Moreover, the safety margin---the largest $Err$ allowed for estimation without losing ranking optimality---is even more generic than as provided.

% By Cauchy–Schwarz inequality, we can derive the following:$Err(S) = \left\| EP(S) + \mathbb{E}_{Y^S} Obj(S) \right\|_2 = \left\|\sum_{v_i \in V^u_{r-1} } ( EP_i(S) + \mathbb{E}_{Y^S} {Obj}_i(S) ) \right\|_2 \leq \sum_{v_i \in V^u_{r-1} } (  EP_i(S) + \mathbb{E}_{Y^S} {Obj}_i(S) )^2$. This requires that the average estimation error on each node to be small. 
% The major gap here is that $EP_i(S)$ is not perfect, it can not directly access the ground-truth distribution of $Y$, and ${Obj}_i(S)$ is conditioned on $Y^{S}$, the unknown ground-truth label distribution of candidate nodes.

\subsection{Complexity Analysis}
\label{3. Complexity}

We use Maximum Sample Reuse (MSR) Monte Carlo estimation~\citep{wang2023data} to calculate Equation \ref{eq: banz} to speed up node selection.
In expectation, $\phi (i;U,V) = n_s^{-1} (\mathbb{E}_{S} [ U (\sS \cup v_i)]  - \mathbb{E}_{S} [ U (\sS)]) $, where $S \sim \sS \subseteq \sV^u_{r-1} \backslash v_i, |\sS| = m-1$. Draw $B$ samples of node sets independently from distribution, denoted as $\mathcal{S} = \{ \sS_1, ..., \sS_b\}$. In terms of node $v_i$, the samples could be categorized into two classes: the samples including $v_i$, $\mathcal{S}_{\in i} = \{ \sS \in \mathcal{S} : v_i \in \sS \}$; and the samples not containing $v_i$, $\mathcal{S}_{\notin i} = \{ \sS \in \mathcal{S} : v_i \notin \sS \}$. %$\mathcal{S} = \mathcal{S}_{\in i} \cup \mathcal{S}_{\notin i}$. 
Then $\phi (i;U,\sV)$ can be approximated by $\hat{\phi}_{MSR}(i; U,\sV) = \frac{1}{|\mathcal{S}_{\in i}|} \sum_{S \in \mathcal{S}_{\in i}} U(\sS) - \frac{1}{|\mathcal{S}_{\notin i}|} \sum_{S \in \mathcal{S}_{\notin i}} U(\sS)$. 

Further, the utility function design involves PPR matrix. The computation of inverse $\hat{\mA}^{-1}$ in Def. \ref{def: feature estimation with propagation} is difficult in space for large graph. We instead use $H$-step power iteration, $\hat{\mX}^{(h)} = (1-\alpha) \hat{\mA} \hat{\mX}^{(h-1)} + \alpha \hat{\mX}^{(0)}$ for $h \leq H$, which is initialized by $\hat{\mX}^{(0)} = \hat{\mX}$. The $L$-layer representation $\hat{\mH}({\sV^p_r}') := \hat{\mX}^{(H)}$ is the final estimation.

% \iffalse
% \begin{theorem} \cite{wang2023data}
% \label{Banzhaf}
%     $\phi_{MSR}(i; U,V) $ is an $(\epsilon, \sigma)- approximation$ to the exact $\phi(i; U,V)$ in $l_2$-norm with $O( \frac{K}{\epsilon^2} \log(\frac{K}{\sigma}))$ calls of $EP(\cdot)$.
% \end{theorem} 
% \fi
\textit{Total Time and Space Complexity}. We provide a detailed analysis in Appendix \ref{sec: algo detail}. The final running time mainly depends on the selection of GNN architecture (e.g., type, the number of layers) in model retraining, along with Banzhaf sampling number and selected node set size. In practice, the model retraining step usually will take more time than the node selection step. Given $R$ self-training rounds, the total complexity is $O( R |\sV|(H (|\sV|+Bk) C + L n_u n_e|\sV|))$ where $H$, $B$, $L$, $n_u$ ,$n_e$ denote the PPR power iteration number, the sample number, and the number of layers, hidden units, and training epochs in training model. We also provide running time analysis in Appendix \ref{sec: exp details}.

The space bottleneck mainly lies in storing and multiplying $\hat{\mA}$ with intermediate state $\hat{\mX}^{(h-1)}$. Since the final logits are computed by adding influence from every available labels, it is suitable for batch computing. Further, by storing with a sparse matrix, the space complexity at most $O(|E^b||V^b|C)$ at the $r$-th iteration, where $V^b$ means the batch of labeled nodes, and $E^b$ denotes edges connected to $V^b$ in $\hat{\mA}$.

%\textbf{Summary}. \fx{Sourav, I'm not sure what to write here?} \sm{I will take care of it after I am done with sec 3}

\section{Experiments}

\label{experiment}
%\todo{F, what are the experiments that we could not do? maybe listing them down would help}
In this section, we conduct experiments on 1) comparing our methods and baselines on the base GNN model across different datasets; 2) ablation studies; 3) hyperparameter analysis on the number of candidate and selected nodes; 4) robustness under noisy labels, different portions of training data, and different base GNN models. The experimental results demonstrate the effectiveness of our framework, particularly the integration of Banzhaf values and calibration.

\subsection{Settings}
%\sm{maybe add a line or two about what is in the appendix- for example the dataset descriptions are in appendix}
We test baseline methods and our method on five graph datasets: for Cora, Citeseer, and PubMed~\citep{yang2016revisiting}, we follow their official split; as for LastFM~\citep{feather}, Flickr~\citep{zeng2019graphsaint}, we split them in a similar portion that training, validation, and test data take 5\%, 15\%, and 80\%, respectively. The base model is set to Graph Convolutional Network (GCN)~\citep{kipf2016semi} by default, while we also include results for other GNN models. The performance is evaluated on the best prediction accuracy of official test data; otherwise, we use all of the 80\% unlabeled test data for evaluation. For each experiment, we select 10 different seeds and display their mean and standard deviation values. The best and second-best results are emphasized in bold and with underlines, respectively. More implementation details are in Appendix \ref{sec: implementation details}.

\textit{Baselines. }We compare our methods with the following baselines.\\ 1) \textbf{Raw} GNN is the base model without self-training;
2) \textbf{M3S}~\citep{sun2020multi} uses deep-clustering to label nodes and selects top $k$ confident nodes;
3) \textbf{CaGCN}~\citep{wang2021confident} calibrates confidence and selects nodes surpassing a pre-defined threshold;
4) \textbf{DR-GST}~\citep{liu2022confidence} selects nodes surpassing a given threshold and reweights pseudo-labels in the loss of training student models with information gain;
5) \textbf{Random} selection shares the same hyperparamters with CPL~\cite{botao2023deep} and the nodes are selected randomly;
6) \textbf{CPL}~\citep{botao2023deep} computes the multi-view confidence with dropout techniques and selects the top $k$ confident nodes. For a fair comparison, we select the suggested hyperparameters from the baseline methods, and employ cross entropy loss~\cite{cover1991entropy} on the already pseudo-labeled and labeled data as the loss function in model re-training for all methods. 
We set the max iteration number as 40, and use validation data to early stop. For node selection, we sample 500 times for calculating Banzhaf values. The two varying hyperparameters are the number of candidate nodes $K$ and selected nodes $k$ in each iteration. $k$ is set as 100 for Cora, Citeseer and PubMed, and 400 for others; $K = k+100$.

\textit{Calibration. }Confidence calibration refers to aligning the confidence with the prediction accuracy, such that high confidence nodes have correct labels at a high probability. Before node selection, we apply confidence calibration to reduce noise in pseudo-labels and alleviate confirmation bias \citep{wang2021confident,radhakrishnan2024design}. Further, calibration can reduce noise in utility function estimation, which is based on output logits propagation. Here, we use ETS \citep{zhang2020mix} for calibration, except CaGCN for PubMed. We also provide a case study on calibration in App. \ref{sec: exp details}.

\subsection{Performance Evaluation}
\label{1. main}

We select a 2-layer GCN as the base model and compared the node classification accuracy using various strategies. Notably, previous methods do not clearly define a stopping criterion, instead using the number of iterations as a hyperparameter. Therefore, we provide two fair comparisons. (1) Table \ref{tab:GCN} presents the best test accuracy achieved within the same 40 iterations for all methods. (2) In Table \ref{tab: final acc} of Appendix \ref{additional exp}, we also report the final test accuracy in the last round based on the number of suggested iterations for baselines. For our method, we present the test accuracy obtained by early stopping using validation data. Additionally, we performed a t-test to determine whether the mean accuracy of our methods is significantly greater than that of the second-best method. Significance levels of 0.10 and 0.05 are marked with "*" and "**", respectively. Our methods consistently outperform other confidence-based methods across all datasets. This supports that our framework, \ourmodel---especially combining Banzhaf value and information gain---is effective in practice.

%\todo{Fangxin, add more details}

\begin{table}[t]
\centering
\caption{Node classification accuracy (\%) with graph self-training strategies on different datasets. Mean, standard deviation, and single-side t-test on our method and second-best methods are demonstrated. Significance levels of 0.10 and 0.05 are indicated by "*" and "**".}
\label{tab:GCN}
\resizebox{\textwidth}{!}{%
\begin{tabular}{@{}lccccccc@{}}
\toprule
\multirow{2}{*}{\textbf{Dataset}} & \multicolumn{6}{c}{\textbf{Baselines}}                                                          & \multirow{2}{*}{\textbf{BANGS}} \\ \cmidrule(lr){2-7}
                                  & \textbf{Raw} & \textbf{M3S} & \textbf{CaGCN} & \textbf{DR-GST} & \textbf{Random} & \textbf{CPL} &                                \\ \midrule
\textbf{Cora}       & 80.82±0.14 & 81.40±0.29 & 83.06±0.11       & 83.04±0.38 & 83.16±0.10 & {\ul 83.72±0.52} & \textbf{84.23±0.62*}  \\
\textbf{Citeseer}   & 70.18±0.27 & 72.00±0.21 & 72.84±0.07       & 72.50±0.26 & 73.38±0.13 & {\ul 73.63±0.19} & \textbf{73.96±0.29**}  \\
\textbf{PubMed}     & 78.40±0.11 & 79.21±0.17 & {\ul 81.16±0.10} & 78.10±0.39 & 79.48±0.32 & 81.00±0.24       & \textbf{81.60±0.34**} \\
\textbf{LastFM} & 78.07±0.31 & 79.49±0.43 & 79.60±1.02       & 79.31±0.55 & 79.42±0.07 & {\ul 80.69±1.11} & \textbf{83.27±0.48**} \\ 
\textbf{Flickr} & 49.53±0.11 &  49.73±0.20 &     49.81±0.28    &  49.67±0.10 & {\ul 50.10±0.18}  & 50.02±0.22 & \textbf{50.23±0.25} \\ \bottomrule

\end{tabular}%
}
\end{table}

\subsection{Ablation Studies}
\label{2. ablation}

In this set of experiments, we explore how different parts of node selection design help enhance performance in Table \ref{tab: ablation}. Note that for a fair comparison, all experiments are under the same self-training framework and parameter. 
We compare with 1) \textbf{Random} selection, and
2) \textbf{Conf(Uncal)}, selecting the top $k$ most confident ones using uncalibrated confidence. To show that our performance not only arises from confidence calibration, we compare with 3) \textbf{Conf(Cal)}, which also selects top confident $k$ but with calibrated confidence. We also explore whether each component is contributing to the final combination by removing each component respectively: 
4) \textbf{BANGS(Uncal)}: our strategy without calibration; 5) \textbf{BANGS(No Banzhaf)}: directly selecting the top $k$ informative nodes from top $k+100$ confident ones; and 6) \textbf{BANGS}, the proposed strategy.

Banzhaf value is an indispensable part of our methods: without it, the overlap in information propagated by previously selected nodes would be ignored. This is supported by our theoretical evidence (please see Appendix \ref{independent selection} and \ref{sec: banzhaf proof}) and the significant decrease in performance in our experiments. On the other hand, calibration is effective in lifting the accuracy of both confidence-based methods and our methods, preventing noisy pseudo-labels from misguiding the training process. Except in Citeseer, our method benefits from calibration in getting correct and informative pseudo-labels.
\begin{table}[t]
\centering
\caption{Ablation study of prediction accuracy (\%) on different datasets.}
\label{tab: ablation}
\resizebox{\textwidth}{!}{%
\begin{tabular}{@{}lcccccc@{}}
\toprule
\textbf{Dataset} & \textbf{Random} & \textbf{Conf(Uncal)} & \textbf{Conf(Cal)} & \textbf{BANGS(Uncal)} & \textbf{BANGS(No Banzhaf)} & \textbf{BANGS} \\ \midrule
\textbf{Cora}     & 83.16±0.10 & 83.70±0.56 & 83.72±0.49          & {\ul 83.73±0.43} & 83.03±0.66 & \textbf{84.23±0.62} \\
\textbf{Citeseer} & 73.38±0.13 & 73.74±0.19 & \textbf{74.02±0.52} & 73.64±0.33       & 72.75±0.50 & {\ul 73.96±0.29}    \\
\textbf{PubMed}   & 79.48±0.32 & 80.00±0.36 & {\ul 80.13±1.16}    & {\ul 80.13±0.34} & 78.42±0.61 & \textbf{81.60±0.34} \\
\textbf{LastFM}   & 79.42±0.07 & 80.69±1.11 & {\ul 83.23±0.25}    & 80.77±1.10       & 82.62±0.44 & \textbf{83.27±0.48} \\
\textbf{Flickr}   &  {\ul 50.10±0.18} & 50.02±0.22  &  50.00±0.25    &  50.01 ± 0.26   &  49.98±0.17  & \textbf{50.23±0.25} \\ \bottomrule
\end{tabular}%
}
\vspace{-3mm}
\end{table}

\subsection{Hyperparameter Analysis}
\label{3. hyperpara}

% \begin{wrapfigure}{R}{0.4\textwidth}
%   \begin{center}
%     \vspace{-0.3in}
%     \includegraphics[width=0.4\textwidth]{NumNodeSelect.png}
%   \end{center}
%   \caption{Performance of our method on Cora with different number of candidate and selected nodes.}
%   \label{fig: topk}
% \end{wrapfigure}

Figure \ref{fig: topk} show analysis of two hyperparameters collectively. In the first set of experiments, we set $k$, the number of nodes selected in iteration as 0, 5, 20, 50, 100, 200, 300, 400, 500, respectively, where 0 equals to raw GCN. The candidate nodes for $k$ smaller than 100 is set as $2k$, otherwise $k+100$. The best number of added labels is around 50, with a balance between exploit and exploration -- fewer nodes discourages student model from effectively and efficiently learns pseudo-label information, while more nodes tend to bring noisy and biased information to misguide model before it generalizes well. Though our method is more sensitive to the number of selected nodes in each iteration, the performance always outperform raw GCN. In the second set, we fixed $k$ as 100, and $K$ as 100, 125, 150, 175, 200, 250, 300, 400, 500. The more candidate nodes allows more possibly informative nodes to be included, while also have the risk of including noisy labels. This concern seems unnecessary in our experiments on Cora -- on one hand, nodes generally have high confidence and therefore high accuracy after calibration; on the other hand, though including some more incorrect pseudo-labels, our selected correct nodes can effectively propagate maximized information benefiting both current and future iterations.

% \begin{figure}[H] 
% \centering 
% \includegraphics[width=0.4\textwidth]{NumNodeSelect.png} 
% \caption{The test accuracy of Cora data when selecting different number of nodes in each iteration.
% } 
% \label{Fig: NumNodeSelect} 
% \end{figure}

\subsection{Robustness}
\label{4. robust}

\textbf{Noisy Data}. We randomly select $\sigma$ portion of  the training and validation nodes, and flip each label to another uniformly sampled different labels. The test accuracy of raw GCN, CPL (confidence only), and our methods on Cora are shown in Figure \ref{fig: noise}. Though the initial performance is close, our methods significantly remain more robust on more noisy data. Interestingly, with a small portion (5\%) of noise, our model would generalize better than on clean labels. 

\textbf{Different Portions of Data}.
In this set of experiments, we randomly select $\beta$\% portion of training data and 15\% validation data and show the test accuracy of raw GCN, CPL, and our methods. Different from official split that selects equal number of labels across all classes, uniform sampling leads to imbalanced labels and decreased performance. In this case, information redundancy is more of a problem -- confidence based methods tend to select majority class to pseudo-label. As shown in Figure \ref{fig: portion}, our methods consistently outperform other methods, especially with fewer training labels.

% \begin{figure}[H]
% \vspace{-1mm}
%     \centering
%      \begin{minipage}[b]{0.3\textwidth}
%         \centering
%         \includegraphics[width=\textwidth]{NumNodeSelect.png}
%         \caption{Impact of candidate and selected nodes.}
%         \label{fig: topk}
%     \end{minipage}
%     \hspace{0.03\textwidth}
%     % First figure
%     \begin{minipage}[b]{0.3\textwidth}
%         \centering
%         \includegraphics[width=\textwidth]{Noisy.png}
%         \caption{Impact of noisy labels.}
%         \label{fig: noise}
%     \end{minipage}
%     \hspace{0.03\textwidth}
%     % Second figure
%     \begin{minipage}[b]{0.3\textwidth}
%         \centering
%         \includegraphics[width=\textwidth]{Portion.png}
%         \caption{Impact of training label portions.}
%         \label{fig: portion}
%     \end{minipage}
    
% \end{figure}

\begin{figure*}[ht]
    \centering
    \begin{subfigure}[t]{0.33\textwidth}
        \centering
        \includegraphics[height=1.3in]{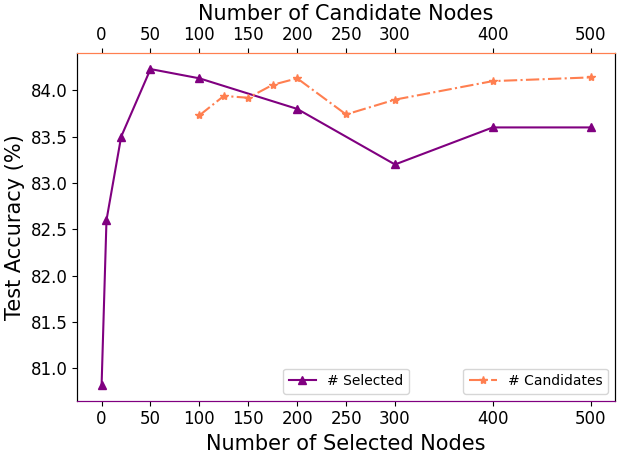}
        \caption{Impact of \#selected and \#candidate nodes}
        \label{fig: topk}
    \end{subfigure}%
    \begin{subfigure}[t]{0.33\textwidth}
        \centering
        \includegraphics[height=1.3in]{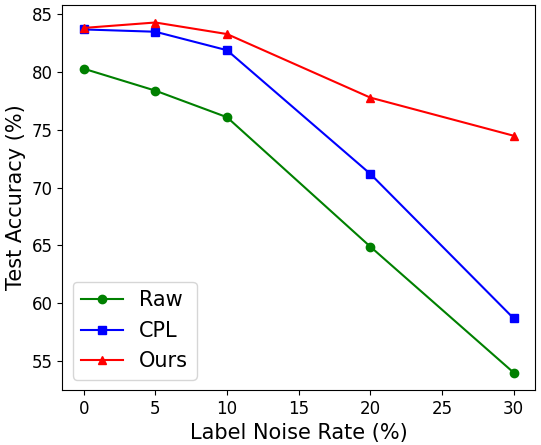}
        \caption{Impact of noisy labels}
        \label{fig: noise}
    \end{subfigure}%
    \begin{subfigure}[t]{0.33\textwidth}
        \centering
        \includegraphics[height=1.3in]{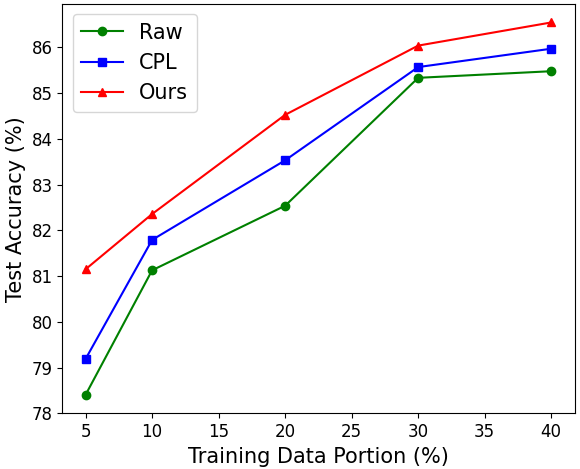}
        \caption{Impact of training labels}
        \label{fig: portion}
    \end{subfigure}
    \caption{Plots of hyperparameter and robustness analysis on Cora dataset. Our method retains the validity and superiority over baselines under different settings and hyperparameters.}
\end{figure*}

\textbf{Different Base Models}.
Note that we have assumed the base model to be a $L$-layer GNN with a graph structure dependent linear aggregation mechanism and ReLU activation. Besides GCN, this assumption is often not strictly satisfied, but we demonstrate that our strategy can still preserve its validity, as shown in Table \ref{tab: base model}.
Specifically, The aggregation mechanism of GraphSAGE~\citep{hamilton2017inductive} sums neighboring node features followed by normalization based on the degree of the nodes.
%incorporates a sampling step where a fixed number of neighbors are randomly sampled for each node during training. 
%Though the aggregation is not fixed, it can still be considered as linear aggregation in expectation. 
GAT~\citep{velickovic2017graph} places different weights on neighbors during aggregation according to self-attention. Despite non-linearity of aggregation in GraphSAGE and GAT, our method preserves its best performance compared to baselines. With the base model as GIN~\citep{xu2018powerful} with max aggregation, our method demonstrates the second-best performance.

\begin{table}[t]
\centering
\caption{Test accuracy (\%) on Cora dataset with different base models. Our method outperforms the baselines across different GNN models in most settings.}
\label{tab: base model}
\resizebox{\textwidth}{!}{%
\begin{tabular}{@{}lccccccc@{}}
\toprule
\multirow{2}{*}{\textbf{\begin{tabular}[c]{@{}l@{}}Base\\ Model\end{tabular}}} & \multicolumn{6}{c}{\textbf{Baselines}} & \multirow{2}{*}{\textbf{BANGS}} \\ \cmidrule(lr){2-7}
                   & \textbf{Raw} & \textbf{M3S}        & \textbf{CaGCN} & \textbf{DR-GST} & \textbf{Random} & \textbf{CPL}     &                      \\ \midrule
\textbf{GCN}       & 80.82±0.14   & 81.40±0.29          & 83.06±0.11     & 83.04±0.38      & 83.16±0.10      & {\ul 83.72±0.52} & \textbf{84.23±0.62} \\
\textbf{GraphSAGE} & 81.50±0.03   & 82.86±0.72          & 83.04±0.36     & 83.50±0.21      & 83.01±0.38      & {\ul 83.43±0.15} & \textbf{83.94±0.32}  \\
\textbf{GAT}       & 81.13±0.32   & 82.80±1.88          & 81.70±0.28     & 83.33±1.14      & 83.10±0.22      & {\ul 83.65±0.35} & \textbf{83.86±0.50}  \\
\textbf{GIN}       & 77.47±0.57   & \textbf{81.35±0.21} & 80.75±0.44     & 78.56±0.55      & 78.72±0.37      & 80.01±0.51       & {\ul 80.77±0.68}     \\ \bottomrule
\end{tabular}
}
\vspace{-2mm}
\end{table}

% Two frameworks are employed in experiments: 1) retraining GNN from scratch in every round; 2) obtaining a representation with unsupervised GNN methods, e.g., contrastive learning, and fitting the labels and updating with a simpler model for the downstream classification task, e.g., logistic regression. To prove the effectiveness of our novel self-training strategy, we expect our framework to work on different base models, e.g., GCN~\cite{kipf2016semi,chen2020simple}, and GraphSAGE~\cite{hamilton2017inductive}. The first line of experiments would be conducted on each combination of framework and base model. Specifically, classification results with only the base model would serve as an ablation study.

%The third line of experiments would involve case studies and visualization, to justify the assertions we claim, e.g., our method would alleviate information redundancy.

\section{Conclusions}
In this paper, we have addressed the limitations in graph self-training by introducing a comprehensive framework that systematically tackles the node selection problem using a novel formulation with mutual information. We have proposed \ourmodel which is a game theory-based method with the utility function based on feature propagation. While \ourmodel exploits the combinatorial structure among the nodes, we have demonstrated the suboptimality of independent selection both theoretically and empirically. Additionally, we have shown that \ourmodel is able to preserve correct ranking even with noisy utility function. Extensive experiments validate the effectiveness and robustness of \ourmodel across different datasets, base models, and hyperparameter settings. These findings underscore the potential of our approach to advance graph-based learning models.

\textit{Limitations and Future Work.} The setting of current self-training methods including ours has simplified assumptions. For example, the pseudo-labels selected in previous rounds are not considered to updated, and the teacher model in the next round are the same as the student model in the previous round. An interesting future direction would be designing methods under more relaxed settings. 
%Future work could explore the potential of our method in more complex settings to further boost performance. Additionally, while our study and previous works in the field do not focus extensively on designing sophisticated stopping criteria, a better use of validation data could further benefit all current methods.

%The setting of our experiments have fix and simplify components in self-training. For example, the pseudo-labels selected in previous rounds are not considered to updated, and the teacher model in the next round are the same as the student model in the last round. Future work could explore the potential of our method in more complex settings to further boost performance. Additionally, while our study and previous works in the field do not focus extensively on designing sophisticated stopping criteria, a better use of validation data could further benefit all current methods.
%The main assumption of this work is that the set of nodes are selected in an iterative manner. 

\clearpage

\bibliography{ref}
\bibliographystyle{iclr2025_conference}

\clearpage

\appendix

\section{Notation Table}
\label{sec: notation}
\begin{table}[h]
\centering
\caption{Frequently used notations and their explanations}
\begin{tabular}{|c|l|}
\hline
\textbf{Notation} & \textbf{Explanation} \\ \hline
$\gG$ & Graph, $\gG = \{ \mA, \mX, \vy^{l} \}$  \\ 
$\mA$ & Adjacency matrix of graph $\gG$ \\ 
$\mX$ & Feature matrix of nodes \\
$r$ & Iteration round number \\
$\sV^l$ & Set of labeled nodes \\
$\sV^p_r$ & Set of all pseudo-labeled nodes\\
$\sV^u_r$ & Set of unlabeled nodes in $r$-th round \\
$\sV^{p'}_r$ & Set of newly added pseudo-labeled nodes in $r$-th round\\%, s.t., $\sV^p_r = \sV^p_{r-1} \cup \sV^{p'}_{r}$, $\sV^u_r = \sV^u_{r-1} \setminus \sV^{p'}_{r} $ \\
$\vy^l, \vy^p_r, \vy^u_r$ & Ground-truth Labels of labeled, pseudo-labeled and unlabeled nodes in $r$-th round \\
$f_{r}$ & Student model at $r$-th round, also equivalent to the teacher model at $(r+1)$-th round \\
$l$ & Influence propagation step number\\
$\hat{I}_f (i \rightarrow j,l)$& The feature influence of node $v_i$ on node $v_j$ after $l$-steps \\
$H(v_i)$ & Entropy (uncertainty) of
(in) a node $v_i$\\
$\mathcal{O}( \sV^{p'}_r )$ & The goal of self-training with $\sV^{p'}_r$ selected in the $r$-th round  \\
$\sS$ & Sampled node set \\
$U(\cdot)$ & The noiseless utility function \\
$\hat{U}(\cdot)$ & The estimated utility function \\
$\phi (i;U,\sV)$ & Banzhaf value of node $v_i$ in candidate set $\sV$ \\
$k$ & The number of selected nodes in each round \\
$K$ & The number of candidate nodes in each round \\
\hline
\end{tabular}
\label{tab:notations}
\end{table}

\section{Proofs}

\subsection{Mutual information Estimation}
\label{sec: mutual information Estimation}

This section proves Lemma \ref{lemma: MI and IPE}. 
Denote the distribution of unlabeled data as $y$ and its prediction as $u$. Following \cite{bridle1991unsupervised,berthelot2019remixmatch,zhao2023entropy}, the objective of semi-supervised learning is formalized as 

\begin{equation}
\label{obj}
\begin{split}
    I(y;u) = & \iint p(y,u) \log \frac{p(y,u)}{p(y)p(u)} dy du \\
    = & \int p(y) dy \int p(u|y) \log \frac{ p(u|y)}{ \int p(y) p(u|y) dy} du.
\end{split}
\end{equation}

Given we have $N$ samples. Writing with integrals, we obtain
\begin{equation}
\label{eq: obj_int}
\begin{split}
   I(y;u) = & \E_y [ \int p(u|y) \log \frac{ p(u|y)}{\E_y [p(u|y)] } du] \text{   (Taking expectation over $y$)}\\
   = & \E_y [ \sum_i^N p(u_i|y) \log \frac{ p(u_i|y)}{\E_y [p(u_i|y)] }] \text{   (Taking expectation over $u$)}\\
   = & \E_y [ \sum_i^N p(u_i|y) \log { p(u_i|y)}] - \sum_i^N \E_y[p(u_i|y)] \log \E_y[p(u_i|y)] \text{   (Log rule)}\\
   = & H (\E_y [f(u|y)] ) - \E_y [H (f(u|y) )].
\end{split}
\end{equation}

In the last line, the first term is entropy of the unlabeled dataset prediction distribution, and the second term equals to the sum of individual prediction entropy.

Next, we connect this lemma with our objective function design.
In the $r-1$ round, $u$ is the prediction distribution of teacher model $f_{r-1}$, previous works that employ Equation \ref{eq: obj_int} to optimize parameters of teacher prediction model $f_{r-1}$ in the current round. In our model, $u$ is the student model prediction distribution.
While we fix the teacher model, and use Equation \ref{eq: obj_int} to select nodes, such that the student model $f_r$ in the next round is improved. In a nutshell, we do not aim to optimize the current model but to let student model learn maximal mutual information about unlabeled data.

According to Equation \ref{eq: max MI}, the goal can be approximated by an average over the unlabeled data set:
\begin{equation}
\label{eq: approx_obj_int}
\begin{split}
  & I ( {y}^u_{r-1} ;\hat{y}^u_{r-1} \vert  {\hat{\vy}^{p'}_{r}}, \hat{\vy}^p_{r-1}, \gG) \\
  & \approx \frac{1}{ |\sV^u_{r-1}|} \sum_{i=1}^{|\sV^u_{r-1}|} \sum_{c=1}^C f_r(u) \log f_r(u) - \sum_{c=1}^C ( \frac{1}{|\sV^u_{r-1}|} \sum_{i=1}^{|\sV^u_{r-1}|} f_r(u)) \log (  \frac{1}{|\sV^u_{r-1}|} \sum_{i=1}^{|\sV^u_{r-1}|} f_r(u))  \\
  & \approx  \frac{1}{|\sV^u_{r-1}|} \sum_{v_i \in \sV^u_{r-1}} H( \hat{y}_i \vert \hat{\vy}^{p'}_{r}, \hat{\vy}^{p}_{r-1}, \gG) - H( \hat{y}^u_{r-1} \vert \hat{\vy}^{p'}_{r}, \hat{\vy}^{p}_{r-1}, \gG) ,
\end{split}
\end{equation}

where $\hat{y}^u_{r-1} \approx \sum_{i=1}^{|\sV^u_{r-1}|} f_r(u)$, is approximated by unlabeled data.
% Combing Equation \ref{eq: approx_obj_int} and \ref{eq: conditional MI}, 

% \begin{equation}
% \begin{split}
%     & I ( {y}^u_{r-1} ;\hat{y}^u_{r-1}, {\hat{\vy}^{p'}_{r}} \vert  \hat{\vy}^p_{r-1}, \gG) ] \\
%     \approx &  \frac{1}{|\sV^u_{r-1}|} \sum_{v_i \in \sV^u_{r-1}} H( \hat{y}_i \vert \hat{\vy}^{p'}_{r}, \hat{\vy}^{p}_{r-1}, \gG) - H( \hat{y}^u_{r-1} \vert \hat{\vy}^{p'}_{r}, \hat{\vy}^{p}_{r-1}, \gG) + I ( {y}^u_{r-1}; {\hat{\vy}^{p'}_{r}} \vert  \hat{\vy}^p_{r-1}, \gG).
% \end{split}
% \end{equation}

% Further, as ${y}^u_{r-1}$ is unknown, $I ( {y}^u_{r-1}; {\hat{\vy}^{p'}_{r}} \vert  \hat{\vy}^p_{r-1}, \gG)$ cannot be directly estimated from data.

\subsection{Why independent selection is not optimal?}
\label{independent selection}

By conditional information, 
\begin{equation}
\label{eq: conditional MI}
\begin{split}
&  I ( {y}^u_{r-1} ;\hat{y}^u_{r-1} \vert  {\hat{\vy}^{p'}_{r}}, \hat{\vy}^p_{r-1}, \gG)\\
     = & I ( {y}^u_{r-1} ;\hat{y}^u_{r-1}, {\hat{\vy}^{p'}_{r}} \vert  \hat{\vy}^p_{r-1}, \gG)  - I ({y}^u_{r-1}; {\hat{\vy}^{p'}_{r}} \vert  \hat{\vy}^p_{r-1}, \gG).
\end{split}
\end{equation}

Denote the pseudo-labels of $n$ nodes in ${\sV^p_r}'$ as $\{ \hat{y}^{p'}_1, \dots \hat{y}^{p'}_n \}$. 
% Recall Equation \ref{eq: max MI} that our goal is to optimize

% \begin{equation*}
% %\label{eq: max MI}
%     \max_{ {\sV^p_r}' \subset \sV^u_{r-1}} I ( {y}^u_{r-1} ;\hat{y}^u_{r-1}, {\hat{\vy}^{p'}_{r}} \vert  \hat{\vy}^p_{r-1}, \gG) = 
%     I ( {y}^u_{r-1} ;\hat{y}^u_{r-1} ,  \hat{y}^{p'}_1, \dots \hat{y}^{p'}_{n}, \vert \hat{\vy}^p_{r-1}, \gG) 
% \end{equation*}

%In this proof, we remove $\mathbb{E}_{ Y^{p'}_r}$ as we assume $\hat{\vy}^{p'}_r = \vy^{p'}_r$.

According to chain rule of mutual information, the first item 

\begin{align}
\begin{split}
    \label{eq: chain}
     I ( {y}^u_{r-1} ;\hat{y}^u_{r-1}, {\hat{\vy}^{p'}_{r}} \vert  \hat{\vy}^p_{r-1}, \gG) 
    = & I ( {y}^u_{r-1} ;\hat{y}^u_{r-1} ,  \hat{y}^{p'}_1, \dots \hat{y}^{p'}_{n}, \vert \hat{\vy}^p_{r-1}, \gG)  \\
    = & I( \hat{y}^u_{r-1} , \hat{y}^{p'}_1 ; {y}^u_{r-1} \vert \hat{\vy}^p_{r-1},\gG)
    + I( \hat{y}^u_{r-1} , \hat{y}^{p'}_2 ; {y}^u_{r-1} \vert  \hat{y}^{p'}_1, \hat{\vy}^p_{r-1},\gG)
    + \dots \\
    & +  I( \hat{y}^u_{r-1} ,\hat{y}^{p'}_{n}; {y}^u_{r-1} \vert \hat{y}^{p'}_1, \dots \hat{y}^{p'}_{n-1} , \hat{\vy}^p_{r-1},\gG)
\end{split}
\end{align}

While previous independent selection usually implictly optimize for a different goal:

\begin{align}
\begin{split}
    \label{eq: indep1}
    \max_{ {V^p_r}' \subset\sV^u_{r-1}}   
    I( \hat{y}^{p'}_1 ; {y}^u_{r-1} \vert \hat{\vy}^p_{r-1},\gG)
    + I( \hat{y}^{p'}_2 ; {y}^u_{r-1} \vert  \hat{\vy}^p_{r-1},\gG)
    + \dots 
     + I( \hat{y}^{p'}_{n}; {y}^u_{r-1} \vert \hat{\vy}^p_{r-1},\gG),
\end{split}
\end{align}

or, considering the correlation with other labels,

\begin{align}
\begin{split}
    \label{eq: indep2}
    \max_{ {V^p_r}' \subset\sV^u_{r-1}}   &
    I( \hat{y}^u_{r-1} , \hat{y}^{p'}_1 ; {y}^u_{r-1} \vert \hat{\vy}^p_{r-1},\gG)
    + I( \hat{y}^u_{r-1} , \hat{y}^{p'}_2 ; {y}^u_{r-1} \vert  \hat{\vy}^p_{r-1},\gG)
    + \dots \\
     & + I( \hat{y}^u_{r-1} ,\hat{y}^{p'}_{n}; {y}^u_{r-1} \vert \hat{\vy}^p_{r-1},\gG),
\end{split}
\end{align}

Specifically, the matched items in Equation \ref{eq: chain} and \ref{eq: indep1} or \ref{eq: indep2} usually do \textbf{not} equal to each other. Take the $j$-th item and one node $v_i \in\sV^u_{r-1}$ as an example. Their difference, namely \textbf{interation information} in information theory, is formulated as 

\begin{equation}
\label{neq: last_1}
    \Delta^1_j = I( \hat{y}^u_{r-1} ,\hat{y}^{p'}_{j}; {y}^u_{r-1} \vert \hat{y}^{p'}_1, \dots \hat{y}^{p'}_{j-1} , \hat{\vy}^p_{r-1},\gG) -
    I( \hat{y}^u_{r-1} ,\hat{y}^{p'}_{j}; {y}^u_{r-1} \vert \hat{\vy}^p_{r-1},\gG)
\end{equation}

Nonetheless, by the non-monotonicity of conditional mutual information, conditioning can either increase, preserve, or decrease the mutual information between two variables~\citep{tufts}. This is the case even when random variables are pairwise independent.

Similarly, we can derive the interaction information for the second item in Equation \ref{eq: conditional MI} that:

\begin{equation}
\label{neq: last_2}
    \Delta^2_j = - I(\hat{y}^{p'}_{j}; {y}^u_{r-1} \vert \hat{y}^{p'}_1, \dots \hat{y}^{p'}_{j-1} , \hat{\vy}^p_{r-1},\gG) +
    I(\hat{y}^{p'}_{j}; {y}^u_{r-1} \vert \hat{\vy}^p_{r-1},\gG)
\end{equation}

Only when $\sum^n_{j=1} (\Delta^1_j+\Delta^2_j) = 0$, independent selection provides the optimal estimation of goal. However, this is not guaranteed in practice without further assumption. Therefore, a combinatorial selection method would be more effective than the independent selection procedure.

\subsection{Preliminary: Graph Neural Networks}

To start proof, we first formally formulate Graph Neural Networks (GNN). For node $v_i$, GNNs utilize its feature $\vx_i \in \R^D$ of $D$ size as the initial embedding $\vx^{(0)}_i$ and adjacency matrix between $v_i$ and its neighborhood nodes $\mathcal{N}_i$. Similarly, GNNs use $\emA_{i,j}$ to create inital edge embedding $\ve^{(0)}_i$. Denote the embedding in layer $l$ (i.e., after $l$-th propagation) as $\vx^{(l)}_i$, and its value at the $d$'th position as $\vx^{(l)}_{id}$. Commonly, GNNs can be described through:

\begin{align}
    \begin{split}
    \label{eq: GNN}
    \vx^{(l)}_i =& f_{node} (\vx^{(l-1)}_i , {AGG}_{v_j \in \mathcal{N}_i}[ f_{msg} ( \vx^{(l-1)}_i , \vx^{(l-1)}_j, \emA_{i,j} )]), \\
    \ve^{(l)}_i =& f_{edge} (\vx^{(l)}_i, \vx^{(l)}_j, \ve^{(l-1)}_i).
    \end{split}
\end{align}

Here, $AGG [\cdot]$ is a function to aggregate the node and its neighborhood embeddings, and $f_{node}$, $f_{edge}$ and $f_{msg}$ are functions, which can be linear layers or models with skip connections.

Deonote the output logit value at dimension $d$ in the last $L$ layer as $x^{(l)}_{i,d}$. The final predicted probability of class $d$ is often calculated through softmax function:

\begin{equation}
\label{eq: softmax}
    P_{i,d} = \text{Softmax}(x^{(L)}_{i,d}) =  \frac{ e^{x^{(L)}_{i,d} } }{ \sum_{c=1}^{C} e^{x^{(L)}_{i,d}} }.
\end{equation}

In a semi-supervised setting, we have access to ground-truth labels $\vy^l$ of a few nodes $\sV^l$. In back propagation, the labels influence the weight and bias of $f_{node}$ and $f_{edge}$ through gradient descent.
%\fx{may revise later}

%Graph Convolutional Networks (GCN), the most commonly used base model with simple structure in our paper. 

\subsection{Connection between Feature Influence and Random Walk}
\label{Sec: connection proof}
\begin{proof}

In a general neural network with ReLU activation, the $d_1$-th output logit can be written as $ x^{(L)}_{d_1} = \frac{1}{\lambda^{(L-1)/2} } \sum_{q=1}^{\phi} z_{d_1,q} x_{d_1,q} \prod_{l=1}^L  \emW^{(l)}_{d_1,q}$~\citep{choromanska2015loss}. $\lambda$ is a constant related to the size of neural network, and $L$ is the hidden layer number. Here, $\phi$ denotes the total number of paths, $z_{d_1,q} \in {0,1}$ denotes whether computational path $p$ is activated, $x_{d_1,q}$ represents the input feature used in the $q$-th path of logit $d_1$, and $\emW^{(l)}_{d_1,q}$ is the used entry of weight matrix at layer $l$. By assuming equal activation probability of $\rho$, we get:

\begin{equation}
    \label{eq: general nn}
    \vx^{(L)}_{d_1} = \frac{\rho}{\lambda^{(L-1)/2} } \sum_{q=1}^{\phi} \vx_{d_1,q} \prod_{l=1}^L  \emW^{(l)}_{d_1,q}.
\end{equation}

Under Assumption \ref{assump: gnn}, the $l$ layer feature updating equation in Equation \ref{eq: GNN} can be rewritten as  

\begin{equation}
\label{eq: gnn_assump}
    \vx^{(l)}_i = ReLU (\mW^{(l-1)} \cdot {AGG}_{v_j \in \mathcal{N}_i}[ f_{msg} ( \vx^{(l-1)}_i , \vx^{(l-1)}_j, \emA_{i,j} )] ),
\end{equation}

where $\mW^{(l)}$ is a trainable weight matrix in the $l$-th layer of GNN. 

According to \cite{gasteiger2022influence}, from the view of path,
the feature of starting node $i$ is influenced by ending node $j$ by all $\phi$ activated paths $q$ through the acyclic \textit{computational} graph structure, derived from the GNN structure. The activation status of path $q$ depends on ReLU function.
Therefore,$\vx^{(l)}_{i,d}$ the $d$-th logit of $\vx^{(l)}_{i}$, could also be expressed as

\begin{equation}
\label{eq: gnn_path}
    \vx^{(l)}_{j,d} = \frac{1}{\lambda^{(L-1)/2} } \sum_{v_i \in \sV } \sum_{p=1}^{\psi} \sum_{q=1}^{\phi} z_{i,d,p,q} x_{i,d,p,q} \prod_{l=1}^L a^{(l)}_{i,p} \emW^{(l)}_{d,q}, 
\end{equation}

Here, we introduce \textit{data-based} graph path $q$, which is decided by $\gG$.
$z_{i,d,p,q} \in \{0, 1\}$ is a indicator of whether the graph path $q$ is active with the ReLU in the \textit{computational} graph. 
Note that $p$ is counted to $\psi$ separately for each combination of the output(ending) node $v_i$, logit position $d$ and the \textit{computational} graph path $q$. Similarly, define $x_{i,d,p,q}$ is the input feature of $v_i$ used in the $q$-th computational path, for graph path $p$ at output logit $d$. $a^{(l)}_{i,p}$ denotes the normalized graph-dependent aggregation weights of edges, parameter of $AGG[\cdot]$. 
$\emW^{(l)}_{d,q}$ is the entry for layer $l$ and logit $d$ in the weight matrix $\mW$, decided by the GNN structure. 

According to Definition \ref{def: influence magnitude}, the $L$-step influence score of node $v_i$ on $v_j$ is computed by 

\begin{equation}
\label{eq: I_f_unnormalized}
    \hat{I}_f (i \rightarrow j,L) = \sum_{d_1} \sum_{d_2} |\E [\frac{\partial x_{j,d_1}^{(L)}}{\partial x_{i,d_2}^{(0)}} ]|.
\end{equation}

Under a $L$-layer GNN, we calculate the derivative of Equation \ref{eq: gnn_path}:

\begin{equation}
\label{eq: derivative}
    \frac{\partial x_{j,d_1}^{(L)}}{\partial x_{i,d_2}^{(0)}} = \frac{1}{\lambda^{(L-1)/2} } \sum_{p=1}^{\psi} \sum_{q=1}^{\phi'} z_{i,d_1,p,q} \prod_{l=1}^L a^{(l)}_{i,p} \emW^{(l)}_{d_1,q},
\end{equation}

where the $\vx_{j,d_1}^{(L)}$ represents the output feature of $v_j$ at $d_1$ position,  $\vx_{i,d_2}^{(0)}$ represents the input feature of $v_i$ at $d_2$ position. $\phi'$ denotes the number of \textit{computational} paths related to input dimension $d_2$ and output dimension $d_1$. In Assumption \ref{assump: gnn}, all paths in the computation graph of the model are activated with the same probability of success $\rho$, i.e., $\E[z_{i,d_1,p,q}] = \rho$,  then the expectation of Equation \ref{eq: derivative} can be written as follows. The second line in this equation holds, as $a^{(l)}_{i,p}$ only depends on the path in graph $\gG$, independent of $\emW^{(l)}_{d,q}$.

\begin{equation}
\label{eq: derivative_expectation}
\begin{split}
    \E [\frac{\partial x_{j,d_1}^{(L)}}{\partial x_{i,d_2}^{(0)}} ] &= \frac{1}{\lambda^{(L-1)/2} } \sum_{p=1}^{\psi} \sum_{q=1}^{\phi'}\rho \prod_{l=1}^L a^{(l)}_{i,p} \emW^{(l)}_{d_1,q} \\
    &= \frac{\rho}{\lambda^{(L-1)/2} } ( \sum_{p=1}^{\psi} \prod_{l=1}^L a^{(l)}_{i,p}) (\sum_{q=1}^{\phi'} \prod_{l=1}^L \emW^{(l)}_{d_1,q} ).
\end{split}
\end{equation}

Here, item $\sum_{p=1}^{\psi} \prod_{l=1}^L a^{(l)}_{i,p}$ sums
over probabilities of all possible paths of length $L$ from node $v_j$ to $v_i$, which is also the probability that a random walk starting at $v_i$ and ending at $v_j$ after taking $L$ steps. Denote this probability as $P^{j \rightarrow i}_L$. Since this random walk item is independent of feature dimension $d_1$ and $d_2$, summing up Equation \ref{eq: derivative_expectation}, we get another expression of Equation \ref{eq: I_f_unnormalized}:

\begin{equation}
\label{eq: I_f_random_walk}
    \hat{I}_f (i \rightarrow j,L) = \frac{\rho}{\lambda^{(L-1)/2} } P^{j \rightarrow i}_L (\sum_{d_1} \sum_{d_2} |\sum_{q=1}^{\phi'} \prod_{l=1}^L \emW^{(l)}_{d_1,q} |).
\end{equation}

Note that $\mW^{(l)}$, as trainable weight matrix in layer $l$, shares the same value for all nodes. Except $P^{j \rightarrow i}_L$, all other items are independent on nodes. 
Therefore, the expectation of $\hat{I}_f (i \rightarrow j,L)$, the influence distribution of any node $v_i \in \sV$ is in proportion to the expectation of $L$-step random walk distribution on $G$ starting at node $v_j$. Since $I_f (i \rightarrow j,L) = \frac{\hat{I}_f (i \rightarrow j,L)}{\sum_{v_z\in \sV} \hat{I}_f (z \rightarrow j,L) }$ is normalized over all nodes, the influence distribution is equivalent in expectation to random walk distribution. Formally, Equation \ref{eq: I_f_random_walk} can be simplified to

\begin{equation}
    I_f (i \rightarrow j,L) \sim P^{j \rightarrow i}_L
\end{equation}

Theorem proved.

\end{proof}

% \begin{assumption}
%     \label{assump: train logit}
%     Denote $ \sV_r^u = \sV \setminus (\sV_l \cup  {\sV^p_r}' \cup \sV^p_{r-1}) $. Assume that
%     $\epsilon_{i,d} > \epsilon_{j,d}$ for $v_i \notin \sV_r^u$ and $v_j \in \sV_r^u$, where $\epsilon_{i,d} = | \hat{x}_{i,d}^{(L)} -  \E[x^{(L)}_{i, d}] |\geq 0$.
% \end{assumption}
% \begin{theorem}
%     \label{theorem: prob bound}
%     With Assumption \ref{assump: train logit}, entropy estimation error bound increases when propagation node sources contain any node in $v_j \in \sV_r^u$.
% \end{theorem}

\subsection{Case study: 1-layer GNN}
\label{sec: case study 1-layer GNN}

\subsubsection{Bounding Estimation Error of Entropies by Logits}
\label{Sec: entropy proof}
\begin{lemma}
\label{lemma: softmax}
Given two distributions $ P_1 = \{p_{1,d}\} $ and \( P_2 = \{p_{2,d}\} \), the probabilities are obtained from logits using the softmax function in Equation \ref{eq: softmax}, and assume the difference in logits $x_{1,d} = x_{2,d} + c_j$ for every dimension $d$. Then the final probability distribution \( P_1 \) is a reweighted version of \( P_2 \): \[
p_{1,j} = \frac{e^{c_j}}{\sum_{k \in C} e^{c_k}} \cdot p_{2,j}.
\]

\end{lemma}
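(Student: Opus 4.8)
The plan is to prove the identity by direct substitution into the softmax map of Equation~\ref{eq: softmax}, using the single algebraic fact that an additive shift in a logit factors out of the exponential. I read the hypothesis ``$x_{1,d}=x_{2,d}+c_j$ for every dimension $d$'' as a per-coordinate shift, i.e.\ the $d$-th logit of $P_1$ exceeds that of $P_2$ by an amount $c_d$, with the conclusion written at a generic coordinate $j$; this reading is forced, since a shift identical across all coordinates would cancel in the softmax by its translation invariance and give the trivial $P_1=P_2$, whereas the lemma asserts a nontrivial reweighting.

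First I would expand $p_{1,j}=e^{x_{1,j}}/\sum_{k} e^{x_{1,k}}$ and substitute $x_{1,k}=x_{2,k}+c_k$ into every exponential. Since $e^{x_{2,k}+c_k}=e^{c_k}e^{x_{2,k}}$, the numerator becomes $e^{c_j}e^{x_{2,j}}$ and the denominator becomes $\sum_{k} e^{c_k}e^{x_{2,k}}$, so each class is rescaled by its own factor $e^{c_k}$ rather than by a common constant. Next I would re-express the unnormalized weights through $P_2$ instead of the raw logits: writing $Z_2=\sum_{k} e^{x_{2,k}}$ for the partition function of $P_2$ gives $e^{x_{2,k}}=Z_2\,p_{2,k}$, whence $p_{1,j}\propto e^{c_j}p_{2,j}$. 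This already captures the structural content of the lemma, namely that $P_1$ is $P_2$ reweighted coordinatewise by the multiplicative factor $e^{c_j}$.

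The final and most delicate step is to pin down the proportionality constant and confirm that it is exactly the $\sum_{k\in C}e^{c_k}$ appearing in the statement. Here I would impose the normalization $\sum_{j}p_{1,j}=1$ and track carefully how the partition functions $Z_2$, $\sum_{k} e^{x_{1,k}}$, and $\sum_{k}e^{c_k}$ combine, with the aim of having the shared $Z_2$ factor drop out and leave the claimed reweighting factor $e^{c_j}/\sum_{k\in C}e^{c_k}$ multiplying $p_{2,j}$. I expect this normalization bookkeeping to be the main obstacle: everything upstream is a one-line factorization through the exponential, but isolating the denominator in precisely the form asserted by the lemma is where the argument must be carried through with the $Z$-factors kept explicit until they reduce to the stated sum over classes.
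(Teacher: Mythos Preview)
Your approach is the same as the paper's: direct substitution of $x_{1,k}=x_{2,k}+c_k$ into the softmax and factorization through the exponential. Both you and the paper arrive at
\[
p_{1,j}=\frac{e^{c_j}e^{x_{2,j}}}{\sum_{k\in C}e^{c_k}e^{x_{2,k}}}
\]
in one line. Your instinct that the normalization is the only real obstacle is exactly right, and in fact it is more than an obstacle: the step you are planning cannot reach the stated denominator. Writing $Z_2=\sum_k e^{x_{2,k}}$ and $e^{x_{2,k}}=Z_2\,p_{2,k}$ as you propose, the $Z_2$ cancels between numerator and denominator and one obtains
\[
p_{1,j}=\frac{e^{c_j}\,p_{2,j}}{\sum_{k\in C}e^{c_k}\,p_{2,k}},
\]
which is the correct reweighting. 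This equals the lemma's claimed form $\dfrac{e^{c_j}}{\sum_{k}e^{c_k}}\,p_{2,j}$ only when $P_2$ is uniform; in general $\sum_k e^{c_k}p_{2,k}\neq\sum_k e^{c_k}$, and indeed the claimed right-hand side need not even sum to $1$ over $j$. The paper's own proof simply asserts the final equality
\[
\frac{e^{x_{2,j}+c_j}}{\sum_k e^{x_{2,k}+c_k}}=\frac{e^{c_j}}{\sum_k e^{c_k}}\cdot p_{2,j}
\]
without justification, and it is false as a general identity. So the gap is not in your method but in the statement: carrying your computation through to the end yields the denominator $\sum_k e^{c_k}p_{2,k}$, not $\sum_k e^{c_k}$.
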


\begin{proof}
    
To make \( P_1 \) and \( P_2 \) identical, we require \( p_{1,j} = p_{2,j} \) for all \( j \in C \). This implies:

\[
\frac{e^{x_{1,j}}}{e^{x_{2,j}}} = \frac{\sum_{k \in C} e^{x_{1,k}}}{\sum_{k \in C} e^{x_{2,k}}}
\]

Taking $\log$ on both sides:

\[
x_{1,j} - x_{2,j} = \text{some constant } c
\]

This shows that \( x_{1,j} \) and \( x_{2,j} \) must differ by the same constant across all classes \( j \), i.e.,
for all \( j \in C \), where \( c \) is a constant independent of \( j \).

Now, if the difference between the logits \( x_{1,j} \) and \( x_{2,j} \) is not constant but depends on the class \( j \) as \( c_j \), then we have:

\[
x_{1,j} = x_{2,j} + c_j
\]

In this case, the probabilities \( p_{1,j} \) can be expressed as:

\[
p_{1,j} = \frac{e^{x_{1,j}}}{\sum_{k \in C} e^{x_{1,k}}} = \frac{e^{x_{2,j} + c_j}}{\sum_{k \in C} e^{x_{2,k} + c_k}} = \frac{e^{c_j}}{\sum_{k \in C} e^{c_k}} \cdot p_{2,j}
\]

Thus, the final probability distribution \( P_1 \) is a reweighted version of \( P_2 \), where each probability \( p_{2,j} \) is scaled by a factor \( e^{c_j} \) and then normalized across all classes \( C \).
\end{proof}

\begin{lemma}
\label{lemma: entropy}
    When $c_j$ in Lemma \ref{lemma: softmax} is small enough, and for any class $i,j$, $|c_i - c_j| \leq \delta$, then the absolute entropy difference between $P_1$ and $P_2$:
\[
|H(P_1) - H(P_2)| \leq \frac{\delta}{2}.
\]
\end{lemma}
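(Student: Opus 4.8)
\textbf{Proof plan for Lemma \ref{lemma: entropy}.}

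The plan is to write the entropy difference $H(P_1) - H(P_2)$ as an explicit function of the logit shifts $\{c_j\}$ and then bound it using the smallness and the pairwise-closeness assumptions. First I would use Lemma \ref{lemma: softmax} to express $p_{1,j} = \frac{e^{c_j}}{Z} p_{2,j}$ with $Z = \sum_{k} e^{c_k} p_{2,k}$ (note: the correct normalizer is the $p_2$-weighted sum $\sum_k e^{c_k} p_{2,k}$, not $\sum_k e^{c_k}$; I would flag this and use the correct form). Writing $H(P) = -\sum_j p_j \log p_j$, I would expand $H(P_1)$ in terms of $p_{2,j}$, $c_j$ and $\log Z$, so that $H(P_1) - H(P_2) = -\sum_j p_{1,j} c_j + \log Z$ after collecting terms, i.e. the difference is $\log Z - \mathbb{E}_{P_1}[c]$ where $\mathbb{E}_{P_1}[c] = \sum_j p_{1,j} c_j$.

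Next I would exploit that shifting all $c_j$ by a common constant changes neither $P_1$ nor the quantity $\log Z - \mathbb{E}_{P_1}[c]$ (both $P_1$ and the expression are shift-invariant), so without loss of generality I may recentre the $c_j$, e.g. so that $\min_j c_j = 0$; then the hypothesis $|c_i - c_j| \le \delta$ for all $i,j$ gives $0 \le c_j \le \delta$ for every $j$. With this normalization, $Z = \sum_k e^{c_k} p_{2,k} \in [1, e^{\delta}]$, so $\log Z \in [0,\delta]$, and $\mathbb{E}_{P_1}[c] \in [0,\delta]$ as well, which already yields the crude bound $|H(P_1)-H(P_2)| \le \delta$. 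To sharpen this to $\delta/2$, I would use a second-order (Taylor/Jensen-type) estimate: for small $c_j$, $\log Z = \log \mathbb{E}_{P_2}[e^{c}] \approx \mathbb{E}_{P_2}[c] + \tfrac12 \Var_{P_2}(c) + o(\cdot)$ and $\mathbb{E}_{P_1}[c] \approx \mathbb{E}_{P_2}[c] + \Var_{P_2}(c) + o(\cdot)$ (since $P_1$ tilts $P_2$ toward larger $c$), so the leading terms cancel and $H(P_1)-H(P_2) \approx -\tfrac12\Var_{P_2}(c) + o(\cdot)$; since $c$ ranges in an interval of length $\le \delta$, Popoviciu's inequality gives $\Var_{P_2}(c) \le \delta^2/4$, and a more careful bookkeeping of the first-order behavior of $\log p_{1,j} - \log p_{2,j} = c_j - \log Z$ together with $|c_j - \log Z| \le \delta$ and $\sum_j (p_{1,j}-p_{2,j}) = 0$ gives the factor $\tfrac12$ via $|H(P_1)-H(P_2)| = |\sum_j (p_{1,j}-p_{2,j})(c_j - \log Z) - \ldots|$ controlled by half the oscillation of $(c_j)$.

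The main obstacle I anticipate is getting the constant exactly $\tfrac12$ rather than $1$: the crude interval bounds only give $\delta$, so the argument must genuinely use the cancellation between $\log Z$ and $\mathbb{E}_{P_1}[c]$ and the fact that $P_1$ and $P_2$ are close (distance $O(\delta)$ in total variation), which is where the "$c_j$ small enough" hypothesis enters to control the higher-order remainder. I would therefore spend the bulk of the proof on an identity of the form $H(P_1)-H(P_2) = -\sum_j (p_{1,j}-p_{2,j}) c_j - \KL(P_1 \| P_2)$ or a comparable exact decomposition, bound the first sum by $\tfrac{\delta}{2}\sum_j |p_{1,j}-p_{2,j}| = \delta \cdot \mathrm{TV}(P_1,P_2)$ after recentring $c$ to have zero $P_1$-mean (so $\sum_j (p_{1,j}-p_{2,j})c_j$ has mean zero against both and only the oscillation matters), and then show the remaining KL term is a negative lower-order correction, so that in the small-$c$ regime the bound $\delta/2$ holds. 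If an exact $\delta/2$ proves elusive without the smallness hypothesis, the fallback is to present it as a first-order statement: $|H(P_1)-H(P_2)| \le \delta/2 + O(\delta^2)$, absorbing the remainder into the stated "small enough" assumption.
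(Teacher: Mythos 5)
Your correction of the normalizer in Lemma \ref{lemma: softmax} is right ($p_{1,j}=e^{c_j}p_{2,j}/Z$ with $Z=\sum_k e^{c_k}p_{2,k}$), but the identity your whole plan rests on is not an identity. Substituting $\log p_{1,j}=\log p_{2,j}+c_j-\log Z$ into $H(P_1)$ gives the exact decomposition $H(P_1)-H(P_2)=-\sum_j(p_{1,j}-p_{2,j})\log p_{2,j}-\KL(P_1\Vert P_2)$, and since $\KL(P_1\Vert P_2)=\mathbb{E}_{P_1}[c]-\log Z$, your claimed expression $\log Z-\mathbb{E}_{P_1}[c]$ is exactly $-\KL(P_1\Vert P_2)$: you have silently discarded the cross term $-\sum_j(p_{1,j}-p_{2,j})\log p_{2,j}$. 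Your fallback ``exact decomposition'' has the same defect: with $c_j$ in place of $\log p_{2,j}$, the right-hand side equals $-2\KL(P_1\Vert P_2)-\KL(P_2\Vert P_1)\le 0$, so it cannot represent an entropy difference of either sign. The discarded term is not a higher-order remainder: since $p_{1,j}-p_{2,j}=p_{2,j}\bigl(c_j-\mathbb{E}_{P_2}[c]\bigr)+O(\delta^2)$, it equals $\Cov_{P_2}(c,-\log P_2)+O(\delta^2)$ and is the \emph{entire} first-order behavior of $H(P_1)-H(P_2)$, whereas your formula (and your subsequent Popoviciu/variance-cancellation estimates, which correctly evaluate $-\KL\approx-\tfrac12\Var_{P_2}(c)$) predicts an $O(\delta^2)$ change. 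So the bounding steps control the wrong quantity, and the plan as written does not prove the lemma.

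Moreover, this gap cannot be closed while keeping the statement as is: take $P_2=(\tfrac12,\tfrac{1}{2(C-1)},\dots,\tfrac{1}{2(C-1)})$, $c_1=0$ and $c_j=\delta$ for $j\ge 2$; then $H(P_1)-H(P_2)=\tfrac{\delta}{4}\log(C-1)+O(\delta^2)$, which exceeds $\delta/2$ for $C\ge 9$ and arbitrarily small $\delta$. Any rigorous route therefore needs an extra hypothesis controlling the spread of $\log p_{2,j}$ (equivalently, the covariance term), or must weaken the conclusion. For comparison, the paper's own proof reaches $\delta/2$ precisely by making the same omission you did: after substitution it keeps only $-\sum_j p_{1,j}\bigl(c_j-\log\sum_k e^{c_k}\bigr)$, replaces $\log\sum_k e^{c_k}$ by the average $\bar c$, and uses $|c_j-\bar c|\le\delta/2$ (which itself holds for the midpoint of the range, not the arithmetic mean). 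So your attempt parallels the paper's argument in spirit, but to be correct it must either explicitly keep and bound $\sum_j(p_{1,j}-p_{2,j})\log p_{2,j}$ under an added assumption, or state the result as $|H(P_1)-H(P_2)|\le\tfrac{\delta}{2}+|\Cov_{P_2}(c,\log P_2)|+O(\delta^2)$.
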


\begin{proof}
    According to definition of Shannon entropy, 
    \[
H(P_1) - H(P_2) = -\sum_{j \in C} p_{1,j} \log p_{1,j} + \sum_{j \in C} p_{2,j} \log p_{2,j}.
\]
    By substituting with relationship between $p_{1,j}$ and $p_{2,j}$ in Lemma \ref{lemma: softmax}, we get
\[
H(P_1) - H(P_2) = -\sum_{j \in C} \left(\frac{e^{c_j}}{\sum_{k \in C} e^{c_k}} \cdot p_{2,j}\right) \left(c_j - \log \sum_{k \in C} e^{c_k}\right)
= -\sum_{j \in C} p_{1,j} \cdot \left(c_j - \log \sum_{k \in C} e^{c_k}\right)
\]

For small \(c_j\), an approximation could be:
\[
H(P_1) - H(P_2) \approx -\sum_{j \in C} p_{1,j} \cdot (c_j - \bar{c})
\]
where \( \bar{c} \) is the average of \(c_j\). This shows that the entropy difference is a weighted sum of deviations of \(c_j\) from its mean, weighted by the probabilities \(p_{1,j}\).

Further, if for any class $i,j$, $|c_i - c_j| \leq \delta$, then 
\[
|c_j - \bar{c}| \leq \frac{\delta}{2}.
\]

Therefore, the entropy estimation error
\[ |H(P_1) - H(P_2)|  \leq  \sum_{j \in C} p_{1,j} \frac{\delta}{2} = \frac{\delta}{2}  \sum_{j \in C} p_{1,j}  = \frac{\delta}{2}.
\]
    
\end{proof}

\subsubsection{Predict then propagate}
Comparing with Equation \ref{eq: derivative_expectation}, we are interested in the logit value $\E [\vx_{j,d_1}^{(L)}]$, instead of derivative $\E [\frac{\partial \vx_{j,d_1}^{(L)}}{\partial \vx_{i,d_2}^{(0)}} ] $. Therefore, the computational paths are not only those connected to input feature dimension at $d_2$ position and output at $d_1$, but all activated computational paths that output at $d_1$. Thus, from Equation \ref{eq: derivative}, we get:

\begin{equation}
    \label{eq: logit value}
    \E [x_{j,d_1}^{(L)}] = \frac{\rho}{\lambda^{(L-1)/2} } \sum_{v_i \in \sV} \sum_{p=1}^{\psi} \sum_{q=1}^{\phi} x_{i,d_1,p,q} \prod_{l=1}^L a^{(l)}_{i,p} \emW^{(l)}_{d_1,q}.
\end{equation}

%Here, for every input node $v_i \in \sV$, $\vx_{i,d_1,p,q} = $
% $\vx_{i,d_1,p,q} = b_{d_1,i} y_{i}$, where $b_{d_1,i}$ is an correlation constant for every dimension $d_1$ in node $v_i$, and $y_{i}$ is the node label.
% This assumes the input feature of all nodes in the same dimension, in expectation, has the same correlation with the true label. 
Further, we assume $\E [ x_{i,d_1,p,q}] = \mathcal{X}_{d_1,q}$, which means the expectation of input feature only depends on output position $d_1$ and computational path $q$ .
Thus, Equation \ref{eq: logit value} is converted to:

\begin{equation}
    \label{eq: logit with value}
    \begin{split}
    \E [x_{j,d_1}^{(L)}]  
    = & \frac{\rho}{\lambda^{(L-1)/2} } \sum_{v_i \in \sV} \sum_{p=1}^{\psi} \sum_{q=1}^{\phi} \mathcal{X}_{d_1,q} \prod_{l=1}^L a^{(l)}_{i,p} \emW^{(l)}_{d_1,q} \\
    = & \frac{\rho}{\lambda^{(L-1)/2} } \sum_{v_i \in \sV}
     ( \sum_{q=1}^{\phi} \mathcal{X}_{d_1,q} \prod_{l=1}^L  \emW^{(l)}_{d_1,q} )  (\sum_{p=1}^{\psi} \prod_{l=1}^L a^{(l)}_{i,p}) \\
    = & \sum_{v_i \in \sV}
     ( \frac{\rho}{\lambda^{(L-1)/2} } \sum_{q=1}^{\phi} \mathcal{X}_{d_1,q} \prod_{l=1}^L  \emW^{(l)}_{d_1,q} ) P^{j \rightarrow i}_L. \\
     = & \sum_{v_i \in \mathcal{N}(j)} \E[x^{'}_{d_1}] P^{j \rightarrow i}_L.
    \end{split}
\end{equation}

The final line is due to $\E[x^{'}_{d_1}] = \frac{\rho}{\lambda^{(L-1)/2} } \sum_{q=1}^{\phi} \mathcal{X}_{d_1,q} \prod_{l=1}^L  \emW^{(l)}_{d_1,q}$, and $P^{j \rightarrow i}_L = 0$ for any node except neighborhoods of $v_j$, $\mathcal{N}(j)$. $\E[x^{'}_{d_1}]$ is the expectation of final output for node $v_i$ at dimension $d_1$ by a standard neural network, \textbf{without neighborhood information aggregation} as in GNNs. This means each node's features are processed independently through the layers.

By Definition \ref{def: feature estimation with propagation} and Theorem \ref{theom: connection}, we estimate $x_{j,d_1}^{(L)}$ with $\hat{h}_{j,d_1}$, which can be calculated through random walk:

\begin{equation}
\label{eq: h}
\begin{split}
    \hat{h}_{j,d_1}^{(L)}({\sV^p_r}')  = \sum \nolimits_{v_i \in \sV_l \cup  \sS \cup \sV^p_{r-1} } \hat{x}_{i,d_1}^{(L)} P^{j \rightarrow i}_{\infty},
\end{split}
\end{equation}

where $\hat{x}_{i,d_1}^{(L)}$ enote the output logit by trained teacher model $f_{r-1}$, while $\hat{h}_{j,d_1}$ denotes our estimation for $\Tilde{x}_{j,d_1}^{(L)}$, the true logit by the untrained student model $f_r$. As defined, compared with Equation \ref{eq: logit with value}, we only estimate with logits of \textbf{labeled or pseudo-labeled nodes} in Equation \ref{eq: h}.

% \begin{equation}
%     \hat{x}_{i,d_1}^{(L)} = 
%     \begin{cases}
    
%     \widetilde{x}_{i,d_1}^{(L)},      & \text{if } v_i \in {{\sV^p_r}' \cup \sV^p_{r-1}}, \\
%     (\max_{v_i \in {{\sV^p_r}' \cup \sV^p_{r-1} }}{  \widetilde{x}_{i,d_1}^{(L)} } ) t_{i,d_1} ,& \text{o.w. } 
%     \end{cases}
% \end{equation}

% In round $r$, let $\widetilde{x}_{i,d_1}^{(L)}$ denote the output logit by trained teacher model $f_{r-1}$, while $\hat{h}_{j,d_1}$ denotes our estimation for logit by the untrained student model $f_r$.
% Here, for $ v_i \in {\sV_l} $, $\hat{x}_{i,d_1}^{(L)} = (\max_{v_i \in {{\sV^p_r}' \cup \sV^p_{r-1} }}{ \widetilde{x}_{i,d_1}^{(L)}}) t_{i,d_1} $, $t_{i,d_1}$ is the one-hot vector value at the $d_1$ position. Multiplying with $\max_i{ \widetilde{x}_{i,d_1}^{(L)}}$ scales $t_{i,d_1} \in [0,1]$. In other words, we set the logits of ground-truth class as the maximum value in all $\widetilde{x}_{i,d_1}^{(L)}$ for $v_i \in {{\sV^p_r}' \cup \sV^p_{r-1}}$.

%Next, we will prove that \textit{adding any unlabeled node logits into Equation \ref{eq: h} will generate wider error upper bound of final entropy estimation}.

% For labeled or pseudo-labeled nodes used as training data in student model, the final estimation error is small as the back propagation fits better on nodes with fixed labels. 

As all logits will go through Equation \ref{eq: softmax}, which convert them to class probabilities by softmax function. By Lemma \ref{lemma: softmax}, if the logit estimation error, $ \delta_{j,d_1} = x_{j,d_1}^{(L)} - \hat{h}_{j,d_1}^{(L)}({\sV^p_r}') $, is small for all dimensions $d_1$, then the final probabilities computed from $\hat{h}_{j,d_1}^{(L)}({\sV^p_r}')$ are not far from ground-truth probabilities. Thus, the entropy estimation of node $v_j$ is still accurate.

% According to Assumption \ref{assump: train logit}, consider each item $\hat{x}_{i,d} \in \sV$ estimates $ \E[x^{'}_{i, d}]$ with error $\epsilon_{i,d} =  \E[x^{'}_{i, d}] - \hat{x}_{i,d}^{(L)} $, where $ |\epsilon_{i,d}| > |\epsilon_{j,d}|$ for $v_i \notin \sV_r^u$ and $v_j \in \sV_r^u$. 

\subsubsection{1 Layer GNN}

Now we consider a 1-layer GNN, such that $L=1$. Recall Equation \ref{eq: general nn}, the output for node $v_j$ by a standard neural network (NN) without aggregation can be represented as:

\begin{equation}
    \label{eq: 1-layer gnn}
    x^{(1)}_{j,d_1} = \rho \sum_{q=1}^{\phi} x_{j,d_1,q}  \emW^{(1)}_{d_1,q}.
    % x_{j,d_1}^{(1)} = \sigma (\sum_{j \in \mathcal{N}(i)} \sum_{d_2} w_{d_1,d_2} x_{j,d_2}^{(0)}),
\end{equation}

% where $\sigma$ is the activation function, $w_{d_1,d_2}$ represents the weight connecting $d_1$-th input neuron and $d_2$-th output neuron. $x_{j,d_2}^{(0)}$, $x_{j,d_1}^{(1)}$ are the input at $d_2$ dimension, and output at $d_1$ dimension, respectively.
Now, since only 1 layer is employed, we simply the notation to distinguish variables from $(r-1)$-th and $r$-th round.
Denote $x^{r-1'}_{i, d_1}$ and $x^{r'}_{i, d_1}$ as the output logits by standard NN at $(r-1)$-th and $r$-th round, and similarly, $\emW^{r-1}_{d_1,q}$ for weights at $(r-1)$-th round and  $\emW^{r-1}_{d_1,q}$ for $r$-th round. Since we assume initial feature $\mathcal{X}_{d_1,q}$ only depends on $q$ and $d_1$, and input feature $\vx_{d_1,q}$ is fixed, the expectation of output $x^{r-1'}_{i, d_1}$ is 

\begin{equation}
    \label{eq: r-1}
    \E[x^{r-1'}_{j, d_1}] = \rho \sum_{q=1}^{\phi} \mathcal{X}_{d_1,q}  \emW^{r-1}_{d_1,q},
\end{equation}
where no right-hand item is related with node $j$. Similarly, the expectation of output $x^{r'}_{i, d_1}$ is
\begin{equation}
    \label{eq: r}
    \E[x^{r'}_{j, d_1}] = \rho \sum_{q=1}^{\phi} \mathcal{X}_{d_1,q}  \emW^{r}_{d_1,q}.
\end{equation}

In Definition \ref{def: threshold banzhaf}, the sampled nodes are in set $\sS$. Denote $\eta_{j,d_1}({\sS})$ as the logit estimation error of node $v_j$ at $d_1$ using $\sS$. Note that by softmax conversion, for any $C$ dimensional variable $X$, $H(X) = H(aX)$, where $a$ is a constant for every dimension. Let $a = \frac{1}{\sum_{v_i \in \sV_l  \cup \sV^p_{r-1}  } P^{j \rightarrow i}_{\infty} }$.
Then expanded by Equation \ref{eq: logit with value}, the logit estimation error of node $v_j$ at dimension $d_1$:

\begin{equation}
    \begin{split}
        \E[\delta_{j,d_1}({\sS})] & =  \E [ x_{j,d_1}^{(1)} - a\hat{h}_{j,d_1}^{(1)}(\sS)] \\
        & =  \E [ x_{j,d_1}^{(1)}] - a \sum_{v_i \in \sV_l  \cup \sV^p_{r-1} \cup  \sS   } \E[ \hat{x}_{i,d_1}^{(1)} ]P^{j \rightarrow i}_{\infty} \\
        & =  \sum_{v_i \in \mathcal{N}(j)} \E[x^{r'}_{i, d_1}] P^{j \rightarrow i}_1 - a\sum_{v_i \in \sV_l  \cup \sV^p_{r-1} \cup  \sS  } P^{j \rightarrow i}_{\infty} \sum_{v_k \in \mathcal{N}(i)} \E[x^{r-1'}_{k, d_1}]  P^{k \rightarrow j}_{1}  \\
        & = \E[x^{r'}_{i, d_1}] - \frac{\sum_{v_i \in \sV_l  \cup \sV^p_{r-1} \cup  \sS  } P^{j \rightarrow i}_{\infty}} {\sum_{v_i \in \sV_l  \cup \sV^p_{r-1}  } P^{j \rightarrow i}_{\infty}}  \E[x^{r-1'}_{k, d_1}] .
    \end{split}
\end{equation}

The first to third line is expansion by prior definitions.
The fourth line is according to Equation \ref{eq: r-1}, that feature expectation only depends on output dimension $d_1$, and $\sum_{v_i \in \mathcal{N}(j)} P^{j \rightarrow i}_1 = \sum_{v_k \in \mathcal{N}(i)} P^{k \rightarrow j}_{1} =1$. Assume that activation paths remain the same in both round, 

\begin{equation}
\label{eq: output diff}
    \begin{split}
    \E[\delta_{j,d_1}({\sS})] & =  \rho ( \sum_{q=1}^{\phi} \mathcal{X}_{d_1,q}  \emW^{r-1}_{d_1,q} - \frac{\sum_{v_i \in \sV_l  \cup \sV^p_{r-1} \cup  \sS  } P^{j \rightarrow i}_{\infty}} {\sum_{v_i \in \sV_l  \cup \sV^p_{r-1}  } P^{j \rightarrow i}_{\infty}}\sum_{q=1}^{\phi} \mathcal{X}_{d_1,q}  \emW^{r}_{d_1,q} ) \\
    & = \rho \sum_{q=1}^{\phi} \mathcal{X}_{d_1,q} (\emW^{r-1}_{d_1,q} - \frac{\sum_{v_i \in \sV_l  \cup \sV^p_{r-1} \cup  \sS  } P^{j \rightarrow i}_{\infty}} {\sum_{v_i \in \sV_l  \cup \sV^p_{r-1}  } P^{j \rightarrow i}_{\infty}} \emW^{r}_{d_1,q}).
    \end{split}
\end{equation}

Though we can initialize network weights as the same, the loss function is computed differently, resulting difference in $\emW^{r-1}_{d_1,q}$ and $\emW^{r}_{d_1,q}$. Therefore, we need to further discuss the weight update equation. The loss function, e.g., cross-entropy loss, is set as the sum of prediction error on each labeled sample. That is, for $n$ samples, we have the loss at dimension $d_1$:

\begin{equation}
\label{eq: CE loss}
L_{d_1} = \sum_{v_i} L_{i,d_1}(y_{i,d_1}, \hat{y}_{i,d_1}), \hat{y}_{i,d_1} = Softmax({x}^{'}_{i,d_1}),
\end{equation}

where $L^{r-1}_{d_1}$ is the loss at $d_1$ position.

Therefore, consider the classic gradient descent with one step $\eta$, we can update initial weight $\emW_{d_1,q}$ in $(r-1)$-th round through:

\begin{equation}
\label{eq: derav r-1}
\begin{split}
    \emW^{r-1}_{d_1,q} 
    &=  \emW_{d_1,q} - \eta \frac{\partial L^{r-1}_{d_1}}{ \partial \emW_{d_1,q} }, \\
    \frac{\partial L^{r-1}_{d_1}}{\partial \emW_{d_1,q} } 
    &=  \sum_{v_i \in \sV_l  \cup \sV^p_{r-1} } \frac{\partial L^{r-1}_{i,d_1}}{ \partial \emW_{d_1,q} } \text{(Eq. \ref{eq: CE loss})} \\
    & =  \sum_{v_i \in \sV_l  \cup \sV^p_{r-1} } \frac{\partial L^{r-1'}_{i,d_1}}{ \partial   x^{r-1'}_{i, d_1} } \frac{ \partial   x^{r-1'}_{i, d_1} } { \partial \emW_{d_1,q} } \text{ (chain rule)}\\
    & = \rho \sum_{v_i \in \sV_l  \cup \sV^p_{r-1} }  \frac{\partial L_{i, d_1}}{ \partial  x^{'}_{i, d_1} }  \sum_{q=1}^{\phi} x_{i,d_1,q} \text{ (Eq. \ref{eq: 1-layer gnn})} 
%\frac{1}{|\sV_l  \cup \sV^p_{r-1}|}
\end{split}
\end{equation}

 Similarly, we derive weight update for $r$-th round:

\begin{equation}
\label{eq: derav r}
\begin{split}
    \emW^{r}_{d_1,q} 
    & =  \emW_{d_1,q} - \eta \frac{\partial L^{r}_{d_1}}{ \partial \emW_{d_1,q} }, \\
    \frac{\partial L^{r}_{d_1}}{ \partial \emW_{d_1,q} } 
    & = \sum_{v_i \in \sV_l  \cup \sV^p_{r-1} \cup \sS} \frac{\partial L^{r}_{i,d_1}}{ \partial \emW_{d_1,q} } \\
    &= \rho \sum_{v_i \in \sV_l  \cup \sV^p_{r-1} \cup  \sS } \frac{\partial L_{i,d_1}}{ \partial  x^{'}_{i, d_1} } \sum_{q=1}^{\phi} x_{i,d_1,q}.
    % \frac{\rho}{| \sV_l  \cup \sV^p_{r-1} \cup  \sS|} 
\end{split}
\end{equation}

Note that as both equations are initialized with the same weight matrix $\mW$ and activation path, according to Equation \ref{eq: 1-layer gnn}, $x^{r-1'}_{i, d_1} = x^{r'}_{i, d_1}$. Further, since labels are fixed for $\sV_l  \cup \sV^p_{r-1}$, so $\frac{\partial L_{i, d_1}}{ \partial  x_{i, d_1} }$ is also the same. Therefore, in the final line of both equations, we remove $r$ or $r-1$ from the superscripts. Substituting Equation \ref{eq: derav r-1} and \ref{eq: derav r} into \ref{eq: output diff}, we get:

\begin{equation}
\label{eq: output diff(simp)}
\begin{split}
    \E[\delta_{j,d_1}({\sS})] & = 
    \rho \sum_{q=1}^{\phi} \mathcal{X}_{d_1,q} [(\emW_{d_1,q} - \eta \frac{\partial L^{r-1}_{d_1}}{ \partial \emW_{d_1,q} }) - \frac{\sum_{v_i \in \sV_l  \cup \sV^p_{r-1} \cup  \sS  } P^{j \rightarrow i}_{\infty}} {\sum_{v_i \in \sV_l  \cup \sV^p_{r-1}  } P^{j \rightarrow i}_{\infty}} (\emW_{d_1,q} - \eta \frac{\partial L^{r}_{d_1}}{ \partial \emW_{d_1,q} }) ] \\
    & = \rho \sum_{q=1}^{\phi} \mathcal{X}_{d_1,q} [ (1- \frac{\sum_{v_i \in \sV_l  \cup \sV^p_{r-1} \cup  \sS  } P^{j \rightarrow i}_{\infty}} {\sum_{v_i \in \sV_l  \cup \sV^p_{r-1}  } P^{j \rightarrow i}_{\infty}}) \emW_{d_1,q}  \\ 
    & - \eta \rho (\sum_{v_i \in \sV_l  \cup \sV^p_{r-1} }  \frac{\partial L_{i, d_1}}{ \partial  x^{'}_{i, d_1} }  \sum_{q=1}^{\phi} \mathcal{X}_{d_1,q}  \\
    & - \frac{\sum_{v_i \in \sV_l  \cup \sV^p_{r-1} \cup  \sS  } P^{j \rightarrow i}_{\infty}} {\sum_{v_i \in \sV_l  \cup \sV^p_{r-1}  } P^{j \rightarrow i}_{\infty}} \sum_{v_i \in \sV_l  \cup \sV^p_{r-1} \cup  \sS } \frac{\partial L_{i,d_1}}{ \partial  x^{'}_{i, d_1} } \sum_{q=1}^{\phi} \mathcal{X}_{d_1,q}) ] .\\
\end{split}
\end{equation}

According to Lemma \ref{lemma: entropy}, when $\delta_{j,d_1}({\sS}) - \delta_{j,d_2}({\sS}) $ is small for any $d_1$ and $d_2$, the final entropy estimation is bounded.

\begin{equation}
    \begin{split}
       & \frac{1}{\rho} \E[\delta_{j,d_1}({\sS}) - \delta_{j,d_2}({\sS})] \\
    = & \frac{1}{\rho} (\E[ x^{r'}_{j, d_1} - x^{r'}_{j, d_2}]  -\frac{\sum_{v_i \in \sV_l  \cup \sV^p_{r-1} \cup  \sS  } P^{j \rightarrow i}_{\infty}} {\sum_{v_i \in \sV_l  \cup \sV^p_{r-1}  } P^{j \rightarrow i}_{\infty}}
    \E[ x^{r-1'}_{j, d_1} - x^{r-1'}_{j, d_2}] )\\
    = & \sum_{q=1}^{\phi} (\mathcal{X}_{d_1,q} \emW^{r}_{d_1,q} - \mathcal{X}_{d_2,q} \emW^{r}_{d_2,q} )
    - \frac{\sum_{v_i \in \sV_l  \cup \sV^p_{r-1} \cup  \sS  } P^{j \rightarrow i}_{\infty}} {\sum_{v_i \in \sV_l  \cup \sV^p_{r-1}  } P^{j \rightarrow i}_{\infty}} (\mathcal{X}_{d_1,q} \emW^{r-1}_{d_1,q} - \mathcal{X}_{d_2,q} \emW^{r-1}_{d_2,q} ) ] \\
     = &   [\sum_{q=1}^{\phi} \mathcal{X}_{d_1,q} (\emW_{d_1,q} - \eta  \sum_{v_i \in \sV_l  \cup \sV^p_{r-1}  \cup \sS} \frac{\partial L_{i,d_1}}{ \partial \emW_{d_1,q} } ) \\
     - & \mathcal{X}_{d_2,q} (\emW_{d_2,q} - \eta  \sum_{v_i \in \sV_l  \cup \sV^p_{r-1} \cup \sS} \frac{\partial L_{i,d_2}}{ \partial \emW_{d_2,q} } )] \\
    - &   \frac{\sum_{v_i \in \sV_l  \cup \sV^p_{r-1} \cup  \sS  } P^{j \rightarrow i}_{\infty}} {\sum_{v_i \in \sV_l  \cup \sV^p_{r-1}  } P^{j \rightarrow i}_{\infty}} [ \sum_{q=1}^{\phi} \mathcal{X}_{d_1,q}  (\emW_{d_1,q} - \eta \sum_{v_i \in \sV_l  \cup \sV^p_{r-1} } \frac{\partial L_{i,d_1}}{ \partial \emW_{d_1,q} }) \\
    - & \mathcal{X}_{d_2,q}  (\emW_{d_2,q} - \eta \sum_{v_i \in \sV_l  \cup \sV^p_{r-1} } \frac{\partial L_{i,d_2}}{ \partial \emW_{d_2,q} }) ] .\\
    % = &  \sum_{v_i \in \sV^u_{r-1} \backslash \sS   } P^{j \rightarrow i}_{\infty} \sum_{q=1}^{\phi} (\mathcal{X}_{d_1,q} \emW_{d_1,q} - \mathcal{X}_{d_2,q} \emW_{d_2,q}) \\
    % - & \eta  \sum_{v_i \in \sV_l  \cup \sV^p_{r-1} } \frac{\partial L_{i,d_1}}{ \partial \emW_{d_1,q} } (\mathcal{X}_{d_1,q} - \mathcal{X}_{d_2,q}) \\
    % + &  \eta \sum_{v_i \in \sV_l  \cup \sV^p_{r-1} \cup  \sS   }  P^{j \rightarrow i}_{\infty} \cdot  \sum_{v_i \in \sV_l  \cup \sV^p_{r-1} \cup \sS } \frac{\partial L_{i,d_1}}{ \partial \emW_{d_1,q} } (\mathcal{X}_{d_1,q} - \mathcal{X}_{d_2,q})
    \end{split}
\end{equation}

This is to say, the entropy estimation is accurate when the predicted output logits difference between $d_1$ and $d_2$ by student model $f_r$, $\E[ x^{r'}_{j, d_1} - x^{r'}_{j, d_2}]$, is in proportion to the difference by teacher model $f_{r-1}$, $\E[ x^{r-1'}_{j, d_1} - x^{r-1'}_{j, d_2}]$, i.e., $\E[\frac{x^{r'}_{j, d_1} - x^{r'}_{j, d_2}}{x^{r-1'}_{j, d_1} - x^{r-1'}_{j, d_2}}] = \frac{\sum_{v_i \in \sV_l  \cup \sV^p_{r-1} \cup  \sS  } P^{j \rightarrow i}_{\infty}} {\sum_{v_i \in \sV_l  \cup \sV^p_{r-1}  } P^{j \rightarrow i}_{\infty}}$. This is aligned with the gradient of $W^r$ and $W^{r-1}$:  $\sV_l  \cup \sV^p_{r-1} \cup \sS$ and $\sV_l  \cup \sV^p_{r-1}$.

\subsection{Robustness of $k$-Bounded Banzhaf Value}
\label{sec: banzhaf proof}

In this section, we recall conditions in Theorem \ref{theo: banzhaf} --
the distinguishability of ground-truth utility between $v_i$ and $v_j$ on coalitions less than size $k$ is large enough, $\min \nolimits_{m \leq k-1} \Delta_{i,j}^{(m)} (U) \geq \tau$, and the utility perturbation is small such that $\left\| \hat{U} - U \right\|_2 \leq \tau \sqrt{ \sum_{m=1}^{k-1} {|\sV^u_{r-1}|-2 \choose m-1} }$.
\begin{proof}

We begin by recalling the definition of $\Delta_{i,j}^{(m)} (U) $ and $D_{i,j}(U)$ as:

\begin{equation}
\label{eq: delta_m_U}
    \Delta_{i,j}^{(m)} (U) = {|\sV^u_{r-1}|-2 \choose m-1}^{-1} \sum \nolimits_{\sS \subseteq \sV^u_{r-1} \backslash \{v_i, v_j \}, |\sS| = m-1} [U(\sS \cup v_i) - U(\sS \cup v_j)],
\end{equation}

and

\begin{equation}
\label{eq: D_ij}
    D_{i,j}(U)= \frac{k}{n_s} \sum_{m=1}^{k-1} {|\sV^u_{r-1}|-2 \choose m-1} \Delta_{i,j}^{(m)} (U) .
\end{equation}

Then, we rewrite $D_{i,j}(\hat{U})$ as a dot product of $U$ and a column vector $\va \in \mathbb{R}^{n_s}$:

\begin{equation}
    D_{i,j}(\hat{U}) = \va^T U,
\end{equation}
    
where each entry of $\va$ corresponds to a subset $\sS$. Then,

\begin{equation}
\label{eq: DUDhat}
    D_{i,j}(U)D_{i,j}(\hat{U}) = (\va^T U)(\va^T \hat{U}) = (\va^T U)^T (\va^T \hat{U}) = U^T \va \va^T \hat{U} = U^T \mA \hat{U}.
\end{equation}

Thus, since $\mA \mA = (\va \va^T)  (\va \va^T) = a ( \va^T \va ) \va^T = ( \va^T \va) \va \va^T = ( \va^T \va ) \mA$, then

\begin{equation}
\label{eq: UA expansion}
     \frac{ | U^T \mA U| }{\left\| U^T \mA \right\|_2 } = \frac{| U^T \mA U| }{ \sqrt{ U^T \mA \mA U} } = \frac{| U^T \mA U| }{\sqrt{\va^T \va} \sqrt{| U^T \mA  U|} } = \sqrt{ \frac{|U^T \mA U|}{ \va^T \va} }.
\end{equation}

As the weight of Banzhaf value is the same for all subset, and by assumption that $ \min \nolimits_{m \leq k-1} \Delta_{i,j}^{(m)} (U) \geq \tau$, we can rewrite $\frac{|U^T \mA U|}{ \va^T \va} $ as

\begin{equation}
\begin{split}
     & \frac{| \sum^{ |\sS_1| \leq k-2 }_{\sS_1 \subset \sV^u_{r-1} \backslash \{ v_i, v_j\} } \sum^{|\sS_2| \leq k-2}_{\sS_2 \subset \sV^u_{r-1} \backslash \{ v_i, v_j\} } (\frac{k}{n_s})^2  (U(\sS_1 \cup v_i) - U(\sS_1 \cup v_j))(U(\sS_2 \cup v_i) - U(\sS_2 \cup v_j))| } { \sum^{ |\sS| \leq k-2 }_{\sS \subset \sV^u_{r-1} \backslash \{ v_i, v_j\} } (\frac{k}{n_s})^2} \\
     & =  \frac{  ( \sum^{ |\sS| \leq k-2 }_{\sS \subset \sV^u_{r-1} \backslash \{ v_i, v_j\} } \frac{k}{n_s}  (U(\sS \cup v_i) - U(\sS \cup v_j))^2}  { \sum^{ |\sS| \leq k-2 }_{\sS \subset \sV^u_{r-1} \backslash \{ v_i, v_j\} } (\frac{k}{n_s})^2} \\
     & = \frac{  (\sum_{m=1}^{k-1} {|\sV^u_{r-1}|-2 \choose m-1}\frac{k}{n_s} {|\sV^u_{r-1}|-2 \choose m-1} ^{-1}  \sum^{ |\sS| = m-1 }_{\sS \subset \sV^u_{r-1} \backslash \{ v_i, v_j\} } \frac{k}{n_s}  (U(\sS \cup v_i) - U(\sS \cup v_j) ))^2}{ \sum_{m=1}^{k-1}  {|\sV^u_{r-1}|-2 \choose m-1} (\frac{k}{n_s})^2} \\
     & = \frac{  (\sum_{m=1}^{k-1} {|\sV^u_{r-1}|-2 \choose m-1}\frac{k}{n_s} \Delta_{i,j}^{(m)} (U) )^2 } { \sum_{m=1}^{k-1}  {|\sV^u_{r-1}|-2 \choose m-1} (\frac{k}{n_s})^2} \\
     & \geq \frac{  (\sum_{m=1}^{k-1} {|\sV^u_{r-1}|-2 \choose m-1}\frac{k}{n_s}  )^2 {\tau}^2} { \sum_{m=1}^{k-1}  {|\sV^u_{r-1}|-2 \choose m-1} (\frac{k}{n_s})^2} \\
     & = \sum_{m=1}^{k-1} {|\sV^u_{r-1}|-2 \choose m-1}  {\tau}^2 .
\end{split}
\end{equation}

Therefore, $\sqrt{ \frac{|U^T \mA U|}{ \va^T \va} } \geq \sqrt{\sum_{m=1}^{k-1} {|\sV^u_{r-1}|-2 \choose m-1}} \tau$. Further, since the utility perturbation is small such that $\left\| \hat{U} - U \right\|_2 \leq \tau \sqrt{ \sum_{m=1}^{k-1} {|\sV^u_{r-1}|-2 \choose m-1} }$, and by Equation \ref{eq: UA expansion}, then $\left\| \hat{U} - U \right\|_2 \leq \sqrt{ \frac{|U^T \mA U|}{ \va^T \va} }  = \frac{ | U^T \mA U| }{\left\| U^T \mA \right\|_2 } $, which equals $|U^T \mA (\hat{U} - U) | \leq \left\| U^T \mA \right\|_2  \left\| \hat{U} - U \right\|_2  \leq  | U^T \mA U| $ by triangle inequality. Finally, with Equation \ref{eq: DUDhat}, 

\begin{equation}
    D_{i,j}(U)D_{i,j}(\hat{U}) = (\hat{\phi}(i)-\hat{\phi}(j) ) ({\phi}(i)-\phi(j) ) =  U^T \mA \hat{U} = U^T \mA ( (\hat{U} -U) + U) \geq 0.
\end{equation}

\end{proof}

\section{Algorithm Formulation}
\label{sec: algo detail}

In this section, we provide pseudo-code in Algorithm \ref{algo: main} and pipeline figure in Figure \ref{fig:pipeline} for our method, \ourmodel.

\begin{algorithm}
	\caption{\ourmodel}
 
	\begin{algorithmic}[1]
 
        \Require{Graph $\gG$ including $\mA$, $\mX$ and $\vy^l$, Iteration Number $R$, Candidate Node Number $K$, Selection Node Number $k$, Sample Size $k$ and Number $B$, Confidence Calibration Model $f_c$}
        
         \State Calculate $\hat{A}$ by $A$, with added self-loop and row normalization.
         \State Train the initial teacher model $f_0$.
		\For {Every iteration $r \in [1,...R]$ }
            \State Employ $f_{r-1}$ as teacher model.
		\State Calculate confidence for all nodes $V$ with maximum softmax output of $f_{r-1}$.
            \State Calibrate confidence with $f_c$.
            \State Select the top $K$ confident nodes as candidate node set $S_K \subset V$.
            \State Store the initial logits propagated by current labeled nodes $\sV^l \cup \sV^p_{r-1}$.
            \State Sample $B$ times candidate set $\sS \subset \sV^u_{r-1}$, $|\sS| \leq k$
            \For { Every $\sS$ }
                \State Add initial logits by the new logits propagated by $\sS$.
                \State Get entropy and calculate utility function $U(\sS)$ by Eq. \ref{eq: max MI (simp)}. 
            \EndFor
            \State Calculate Banzhaf value $\phi (i;U,\sV^u_{r-1})$ for every $v_i$ accordingly. 
            \State Select top $k$ nodes of highest Banzhaf value from $K$ candidate nodes to label $\sV^p_{r}$, and get $\hat{\vy}^p_{r}$, the predicted label of $\sV^p_r$ by $f_{r-1}$, 
            \State Train student model $f_r$ on $Y^l$ and $\hat{\vy}^p_{r}$.
            \EndFor
	\end{algorithmic} 
 \label{algo: main}
\end{algorithm} 

\subsection{Time Complexity}
\begin{itemize}
    \item Line 1: Add self-loop takes $O(|\sV|)$, and row normalization takes $O(|\sV|^2)$.
    \item Line 2: This depends on model selection. Consider a simple GNN model with number of layers $L$ and hidden units $n_u$, and train $n_e$ epochs. The time complexity is $O(L n_u n_e|\sV|^2)$.
    \item Line 5: Softmax activation takes $O(C|\sV|)$, $C$ is the number of classes.
    \item Line 6: The calibration process also depends the selection of claibration model. Consider Temperature Scaling with $n_{TS}$ iterations. This takes $O(n_{TS}C|\sV|)$.
    \item Line 7: Top-$K$ node takes $O(|\sV^u_{r-1}|log(|\sV^u_{r-1}|))$.
    \item Line 8: This step only propagates from $\sV \backslash \sV^u_{r-1} = \sV^l \cup \sV^p_{r-1}$. Consider propagate $H$ times. So it takes $O(H|\sV| |\sV \backslash \sV^u_{r-1}| C)$.
    \item Line 9: Sampling takes $O(B |\sV^u_{r-1}|)$. 
    \item Line 10 -13: Similarly, line 11 takes $O(H|\sV| kC)$, and line 12 takes $O(kC)$. For all samples, the total running time is $O(B|\sV| kC)$.
    \item Line 14: Using MSR, this could be done in a linear time to sample num -- $O(B)$.
    \item Line 15: The Top-$k$ ranking is no longer than Top-$K$, as $K \geq k$.
    \item Line 16: Same as line 2, takes $O(L n_u n_e|\sV|^2)$.
\end{itemize}

In summary, the initialization takes $O(L n_u n_e|\sV|^2)$. In each round, 1) confidence estimation takes $O(n_{TS}C|\sV|)$; 2) node selection takes $O( H|\sV| (|\sV \backslash \sV^u_{r-1}|+Bk) C )$; 3) retraining also takes $O(L n_u n_e|\sV|^2)$. Given $R$ self-training rounds, the total complexity is roughly $O( R |\sV|(H (|\sV|+Bk) C + L n_u n_e|\sV|))$. The final running time mainly depends on the selection of GNN structure, alongwith Banzhaf sampling number and size. 

\subsection{Space Complexity}

We have provided illustration in Sec. \ref{3. Complexity}. The bottleneck is in loading graph and utility function computation. We can control space complexity to $O(|E^b||V^b|C)$ at the $r$-th iteration, where $V^b$ means the batch of labeled nodes, and $E^b$ denotes edges connected to $V^b$ in $\hat{\mA}$. Nonetheless, the tradeoff between space and time complexity needs consideration. For smaller graphs whose influence matrix can be stored, it takes shorter time to first compute influence matrix for all nodes, and query values during node selection.

\begin{figure}[h]
    \centering
    \includegraphics[width=1\linewidth]{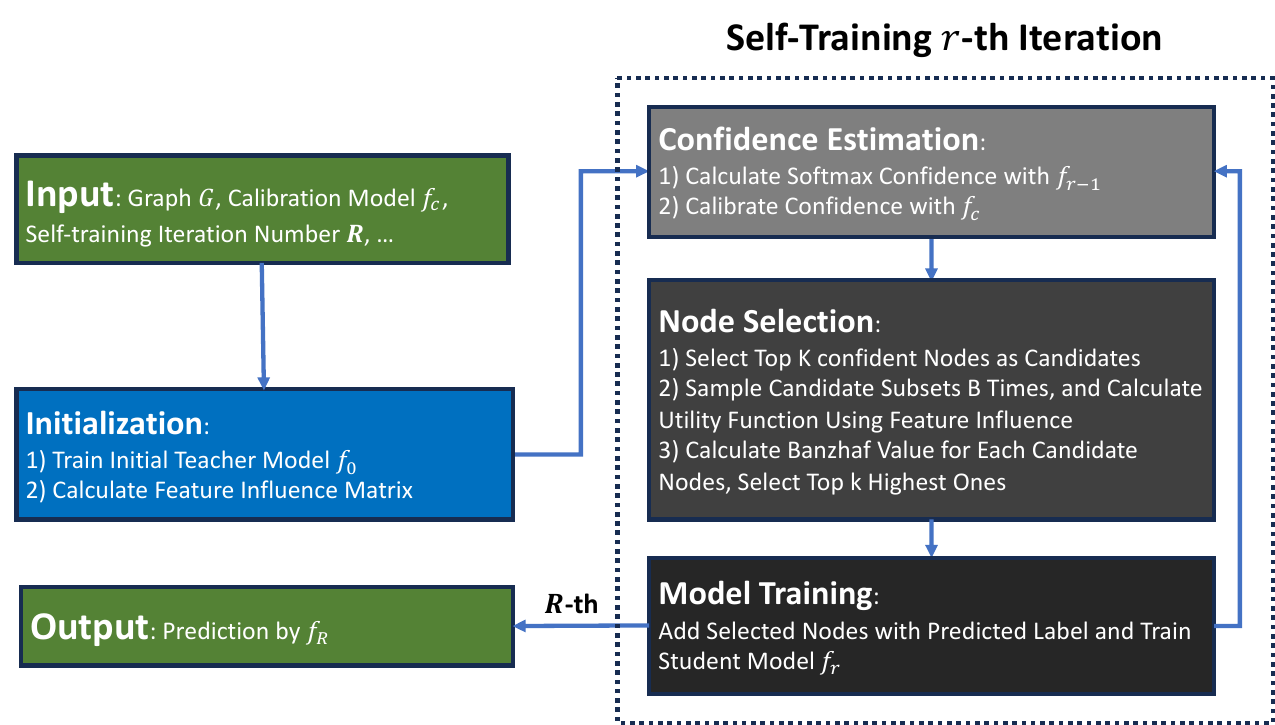}
    \caption{Pipeline of \ourmodel}
    \label{fig:pipeline}
\end{figure}

\section{Implementation Details}
\label{sec: implementation details}

\subsection{Datasets and code}
\label{dataset}

The statistics of datsets are listed in Table \ref{dataset}. Datasets used in this paper could be found in:

\begin{itemize}
    \item Cora, Citeseer and PubMed \citep{yang2016revisiting} (\url{https://github.com/kimiyoung/planetoid});
    \item LastFM \citep{feather}  (\url{https://github.com/benedekrozemberczki/FEATHER}).
    \item Flickr \citep{zeng2019graphsaint} (\url{https://github.com/GraphSAINT/GraphSAINT}).
\end{itemize}

We employ the re-packaged datasets from PyG \citep{Fey/Lenssen/2019} (\url{https://github.com/pyg-team/pytorch_geometric}, version 2.5.2).

Baseline methods:
\begin{itemize}
    \item M3S \citep{sun2020multi} (\url{https://github.com/datake/M3S}). We use the reproduced version in CPL;
    \item CaGCN \citep{wang2021confident} (\url{https://github.com/BUPT-GAMMA/CaGCN});
    \item DR-GST \citep{liu2022confidence} (\url{https://github.com/BUPT-GAMMA/DR-GST}). We use the reproduced version in CPL;
    \item CPL \citep{botao2023deep} (\url{https://github.com/AcEbt/CPL})
\end{itemize}

All code and datasets used in this paper, if presented, are open-sourced under MIT license.

\begin{table}[h]
\centering
\caption{Dataset statistics.}
\label{tab:my-table}
\begin{tabular}{@{}ccccccc@{}}
\toprule
\textbf{Dataset} & \textbf{Cora} & \textbf{Citeseer} & \textbf{PubMed} & \textbf{LastFMAsia} & \textbf{Flickr}  \\ \midrule
\textbf{\#Nodes}    & 2,708  & 3,317 & 19,171 & 7,624 & 89,250 \\
\textbf{\#Links}    & 10,556 & 9,104 & 88,648 & 5,5612 & 899,756  \\
\textbf{\#Features} & 1,433  & 3,703 & 500   & 128  & 500  \\
\textbf{\#Classes}  & 7     & 6    & 3     & 18   & 7  \\ \bottomrule
\end{tabular}
\end{table}

Our own codes are provided in \url{https://anonymous.4open.science/r/BANGS-3EA4}.

\subsection{Computing Resources}
\label{resource}
The experiments are mainly running in a machine with NVIDIA GeForce GTX 4090 Ti GPU with 24 GB memory, and 80 GB main memory. Some experiments of small graphs are conducted on a MacBook Pro with Apple M1 Pro Chip with 16 GB memory.

\section{Experiment Details}
\label{sec: exp details}

\subsection{Additional Experiment Results}
\label{additional exp}

In this section, we include and analyze additional experiment results.

In Table \ref{tab: final acc}, we include test accuracy that is early stopped by validation accuracy. Our methods are the best or second best across all datasets. Nonetheless, it is worth noticing that though CPL tends to achieve best test accuracy in the self-training iteration, CaGCN tends to perform better with early-stopping criterion. This is because the confidence of test data are calibrated using validation data in CaGCN, such that test accuracy is high using with validation accuracy. 

In Table \ref{tab: running_time_comparison}, we use GCN with simple network structure, our methods are slower than CPL; while using GAT that takes longer inference time, our methods is comparable to CPL.
In Table \ref{tab: iteration_block_running_time}, we further show the running times in one seed to compare with the CPL.

% Please add the following required packages to your document preamble:
% \usepackage{booktabs}
% \usepackage{multirow}
% \usepackage{graphicx}
% \usepackage[normalem]{ulem}
% \useunder{\uline}{\ul}{}
\begin{table}[h]
\centering
\caption{Node classification accuracy (\%) with graph self-training strategies on different datasets. Baseline methods report final test accuracy, our method reports test accuracy by early stopping.}
\label{tab: final acc}
\resizebox{\textwidth}{!}{%
\begin{tabular}{@{}lccccccc@{}}
\toprule
\multirow{2}{*}{\textbf{\begin{tabular}[c]{@{}l@{}}Base\\ Model\end{tabular}}} & \multicolumn{6}{c}{\textbf{Baselines}} & \multirow{2}{*}{\textbf{BANGS}} \\ \cmidrule(lr){2-7}
                    & \textbf{Raw} & \textbf{M3S} & \textbf{CaGCN}      & \textbf{DR-GST} & \textbf{Random} & \textbf{CPL}     &                       \\ \midrule
\textbf{Cora}       & 80.82±0.14   & 81.38±0.75   & {\ul 83.06±0.11}    & 82.03±0.93      & 82.43±0.21      & 83.00±0.57       & \textbf{83.47±0.54*}  \\
\textbf{Citeseer}   & 70.18±0.27   & 68.80±0.53   & {\ul 72.84±0.07}    & 71.10±0.88      & 72.50±0.52      & 72.45±0.54       & \textbf{73.23±0.70**} \\
\textbf{PubMed}     & 78.40±0.11   & 79.10±0.28   & \textbf{81.16±0.10} & 77.42±0.60      & 78.93±0.47      & 79.37±0.63       & {\ul 81.03±0.56}      \\
\textbf{LastFM} & 78.07±0.31   & 78.32±0.81   & 79.60±1.02          & 79.92±0.44      & 79.13±0.11      & {\ul 80.56±1.22} & \textbf{82.66±0.43**} \\ \bottomrule
\end{tabular}%
}
\end{table}

\begin{table}[h]
\centering
\caption{Total running time (seconds) comparison for BANGS and CPL on datasets with GCN or GAT as base models.}
\label{tab: running_time_comparison}
\resizebox{0.7\textwidth}{!}{%
\begin{tabular}{@{}lcccc@{}}
\toprule
\textbf{Dataset} & \textbf{BANGS (GCN)} & \textbf{CPL (GCN)} & \textbf{BANGS (GAT)} & \textbf{CPL (GAT)} \\ \midrule
\textbf{Cora}    & 201.82              & 118.64             & 288.54              & 326.62             \\ 
\textbf{Citeseer} & 242.40             & 184.48             & 422.88              & 432.39             \\ 
\textbf{Pubmed}  & 423.40              & 320.22             & 954.87              & 929.13             \\ 
\textbf{LastFM}  & 325.62              & 209.80             & 725.54              & 697.88             \\ \bottomrule
\end{tabular}%
}
\end{table}

\begin{table}[h]
\centering
\caption{Iteration block average running time comparison (s) on Cora.}
\label{tab: iteration_block_running_time}
\resizebox{0.9\textwidth}{!}{%
\begin{tabular}{@{}lccccc@{}}
\toprule
\textbf{Base Model} & \textbf{Method} & \textbf{Sample Size} & \textbf{Confidence Estimation} & \textbf{Node Selection} & \textbf{Model Training} \\ \midrule
\textbf{GCN}        & \textbf{CPL}    & NA                   & 0.04                           & 0.00                    & 3.45                    \\ 
\textbf{GCN}        & \textbf{Ours}   & 100                  & 2.45                           & 0.80                   & 4.08                    \\ 
\textbf{GCN}        & \textbf{Ours}   & 200                  & 2.41                           & 1.33                    & 3.69                    \\ 
\textbf{GAT}        & \textbf{CPL}    & NA                   & 0.06                           & 0.00                    & 11.20                   \\ 
\textbf{GAT}        & \textbf{Ours}   & 100                  & 0.16                           & 0.80                    & 10.20                   \\ 
\textbf{GAT}        & \textbf{Ours}   & 200                  & 0.16                           & 1.31                    & 10.76                   \\ \bottomrule
\end{tabular}%
}
\end{table}

\subsection{Case Study: Confidence Calibration}
\label{sec: Calibration}

\begin{figure}[h]
    \centering
    \includegraphics[width=0.8\linewidth]{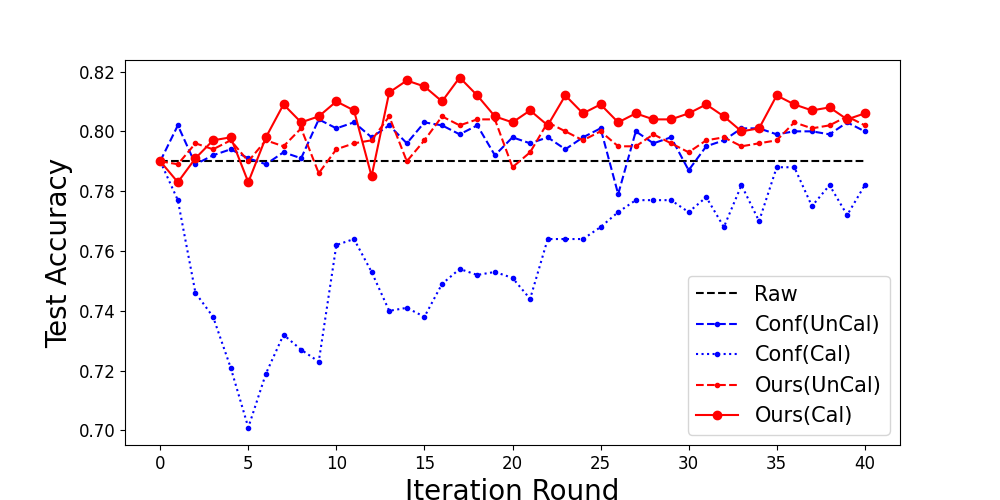}
    \caption{The test accuracy of PubMed data with different node selection criteria in self-training iteration with 40 rounds. In each round, 100 nodes are selected to pseudo-label. Our method with confidence calibration outperforms others.}
    \label{Fig: IGP+Calib}
\end{figure}

% \begin{wrapfigure}{R}{0.5\textwidth}
%   \begin{center}
%     \vspace{-0.2in}
%     \includegraphics[width=0.5\textwidth]{figs/Pubmed.png}
%   \end{center}
%   \caption{The test accuracy of PubMed data with different node selection criteria in self-training iteration with 40 rounds. In each round, 100 nodes are selected to pseudo-label. Our method with confidence calibration outperforms others.}
%   %\todo{F, people might want to see an elaborated experiment on this including all datasets}
%   \label{Fig: IGP+Calib} 
% \end{wrapfigure} 

To motivate the integration of calibration, we perform a case study. In Figure \ref{Fig: IGP+Calib}, we predict labels of PubMed data, which have only 3 classes and exhibits high confidence. The other hyperparameters except node selection criterion are fixed. To select informative nodes from highly confident ones, our framework sets the top $K=200$ confident nodes as candidates and selects $k=100$ nodes with the highest Banzhaf value (Definition \ref{def: threshold banzhaf}).
Selecting nodes with the highest confidence, denoted by Conf (Uncal), would bring slightly better performances compared with the raw GCN predictions. Surprisingly, calibrating the same with CaGCN~\citep{wang2021confident}, denoted by Conf (cal), produces worse performance under raw GCN.
In contrast, with uncalibrated confidence, our framework produces comparable or better results than Conf (Uncal). However, with calibrated confidence, our framework exhibits a significant improvement. In the experiments (Section \ref{2. ablation}), we observe similar trends for most datasets.
\end{document}